\documentclass[preprint,12pt]{elsarticle}

\usepackage{amsmath,amssymb,amsfonts,amsthm,mathtools,bm}
\usepackage{booktabs,multirow,array}
\usepackage{graphicx}
\usepackage{tikz,pgfplots}
\usetikzlibrary{arrows.meta,positioning}
\usepgfplotslibrary{groupplots}
\pgfplotsset{compat=1.18}
\usepackage{xcolor}
\usepackage{enumitem}
\usepackage{microtype}
\usepackage{url}
\usepackage{etoolbox}

\AtBeginEnvironment{table}{%
    \setlength{\belowcaptionskip}{6pt}%
}

\journal{Neurocomputing}

\newtheorem{theorem}{Theorem}
\newtheorem{lemma}{Lemma}
\newtheorem{proposition}{Proposition}
\newtheorem{corollary}{Corollary}

\newcommand{\R}{\mathbb{R}}
\newcommand{\cW}{\mathcal{W}}
\newcommand{\cX}{\mathcal{X}}
\newcommand{\cE}{\mathcal{E}}
\newcommand{\cG}{\mathcal{G}}
\newcommand{\cD}{\mathcal{D}}
\newcommand{\cH}{\mathcal{H}}
\newcommand{\br}{\bm{r}}

\newcommand{\meanSD}[2]{\shortstack{$#1$\\[-1pt]{\scriptsize$(#2)$}}}
\newcommand{\bestSD}[2]{\shortstack{$\mathbf{#1}$\\[-1pt]{\scriptsize$(#2)$}}}
\usepackage{float}
\floatstyle{ruled}
\newfloat{algorithm}{tbp}{loa}
\floatname{algorithm}{Algorithm}

\begin{document}

\begin{frontmatter}

\title{Residual-Guided Adaptive Tucker Rank Allocation for Parameter-Efficient MCI Conversion Prediction}

\author[inst1,inst2]{Guanghua He}		

\author[inst2]{Hancan Zhu\corref{cor1}}
\ead{hancanzhu@yeah.net}

\author[inst3]{Gaohang Yu\corref{cor2}}
\ead{maghyu@163.com}

\author[inst1]{An Zhang}

\author{for~the~Alzheimer's Disease Neuroimaging Initiative\fnref{adni}}
\fntext[adni]{Data used in preparation of this article were obtained from
    the Alzheimer's Disease Neuroimaging Initiative (ADNI) database
    (\url{https://adni.loni.usc.edu}). As such, the investigators within
    the ADNI contributed to the design and implementation of ADNI and/or
    provided data but did not participate in the analysis or writing of
    this report. A complete listing of ADNI investigators can be found at:
    \url{https://adni.loni.usc.edu/wp-content/uploads/how_to_apply/ADNI_Acknowledgement_List.pdf}.}

\cortext[cor1]{Corresponding author.}
\cortext[cor2]{Corresponding author.}

\affiliation[inst1]{
	organization={Department of Mathematics, Hangzhou Dianzi University},
	city={Hangzhou},
	postcode={310018},
	country={China}
}

\affiliation[inst2]{
	organization={School of Mathematics, Physics and Information, Shaoxing University},
	city={Shaoxing},
	postcode={312000},
	country={China}
}

\affiliation[inst3]{
	organization={Department of Mathematics, Zhejiang University of Science and Technology},
	city={Hangzhou},
	postcode={310023},
	country={China}
}

\begin{abstract}
	Allocating limited trainable capacity across convolutional
	layers is a central challenge in parameter-efficient
	fine-tuning (PEFT) of pretrained three-dimensional (3D) networks.
	We propose Residual-Guided Adaptive Tucker Rank Allocation,
	which jointly selects input- and output-channel ranks
	under a fixed adapter-parameter budget while preserving
	the spatial kernel modes.
	To guide rank selection, we learn task-specific proxy updates
	from downstream training data and approximate them using
	candidate Tucker rank pairs.
	A perturbation analysis bounds the output discrepancy between
	the fixed proxy network and its Tucker approximation
	in terms of layerwise reconstruction residuals,
	motivating residual-based allocation.
	We normalize each squared residual by the corresponding
	proxy-update energy to compare relative approximation
	errors across layers.
	Dynamic programming minimizes their sum under a global
	budget constraint that accounts for both channel factors
	and the Tucker core.
	This yields a globally optimal allocation over the specified
	candidate rank pairs.
	We evaluate the method on the Alzheimer's Disease Neuroimaging
	Initiative (ADNI) cohort for predicting conversion from
	mild cognitive impairment to Alzheimer's disease within
	36 months, using hippocampal segmentation pretraining.
	Across five random seeds, the proposed method achieves a mean area
	under the receiver operating characteristic curve (AUC) of $0.7706$,
	compared with $0.7239$ for fixed-ratio Tucker adaptation
	and $0.7430$ for full fine-tuning.
	The total number of trainable parameters is limited to approximately
	$0.618$ million, accounting for only $1.45\%$ of the full model.
	Adaptive Tucker rank allocation thus provides an effective approach
	to parameter-efficient transfer learning in 3D medical imaging.
\end{abstract}

\begin{keyword}
parameter-efficient fine-tuning \sep Tucker decomposition \sep adaptive rank allocation \sep budget-constrained optimization \sep 3D convolution \sep medical image transfer learning
\end{keyword}

\end{frontmatter}

\section{Introduction}

Adapting pretrained three-dimensional (3D) networks to small
labeled cohorts is an important problem in medical image analysis.
Volumetric pretraining can provide useful anatomical representations,
as demonstrated by Med3D and Models Genesis
\cite{chen2019med3d,zhou2019genesis}.
However, these representations must be adapted when the downstream
objective differs from the pretraining task.
For example, hippocampal segmentation provides local anatomical
supervision, whereas predicting conversion from mild cognitive
impairment (MCI) to Alzheimer's disease (AD) requires
subject-level information relevant to a future clinical outcome
\cite{liu2020hippocampus,spasov2019conversion}.
The challenge is to learn progression-related representations
from limited labeled data while updating only a small
fraction of the pretrained network's parameters.

Parameter-efficient fine-tuning (PEFT) reduces the number of
trainable parameters by retaining a largely frozen backbone
and optimizing a small set of task-specific parameters.
Bottleneck adapters and Low-Rank Adaptation (LoRA) are
representative approaches \cite{houlsby2019adapters,hu2022lora},
with medical-imaging studies demonstrating their applicability
to data-limited transfer \cite{dutt2024missed,shen2024medtuning}.
Beyond choosing an adapter parameterization, a key design
question is how to distribute trainable capacity across layers.
AdaLoRA, Explained Variance Adaptation (EVA), and GoRA
address this question through importance scores, activation
statistics, or gradients that guide matrix-rank allocation
\cite{zhang2023adalora,paischer2025eva,he2025gora}.
These approaches incorporate task information into capacity
allocation, rather than determining layerwise ranks solely
from parameter dimensions.

For 3D convolutional networks, capacity allocation must also
account for the tensor structure of the adapted kernels.
Each kernel has distinct input-channel, output-channel,
and spatial modes.
Matrix unfolding preserves its entries, but a single
matrix-rank constraint does not separately specify
the ranks of the two channel modes.
Tensor-based PEFT methods, including FLoRA, offer
parameterizations that retain multidimensional update
structure \cite{si2024flora}.
In channel-mode Tucker adaptation, input- and output-channel
ranks can be selected independently while the spatial
kernel dimensions remain unchanged.
However, the product of these ranks determines the
channel dimensions of the shared core, coupling their
parameter costs.
Rank allocation therefore requires evaluating the
approximation quality and cost of each rank pair
under a global budget.
Fixed-ratio allocation sets both ranks from channel
dimensions, without assessing how well the resulting
capacity represents a downstream task-specific update.

We address this limitation with Residual-Guided Adaptive
Tucker Rank Allocation, hereafter referred to as
residual-guided Tucker. The method uses normalized reconstruction residuals of task-specific proxy updates learned from downstream training
data to guide joint input- and output-channel rank selection.
A perturbation analysis bounds the output discrepancy between
the fixed proxy network and its Tucker approximation
in terms of layerwise reconstruction errors,
providing theoretical support for residual-based allocation.
Dynamic programming minimizes the sum of normalized residuals
under a fixed adapter-parameter budget.
We evaluate the method by transferring an encoder pretrained
for hippocampal segmentation to 36-month MCI conversion
prediction on the Alzheimer's Disease Neuroimaging Initiative
(ADNI) cohort \cite{jack2008adni}.
Comparisons with fixed-ratio Tucker and other PEFT baselines,
together with ablation studies, assess predictive performance,
parameter efficiency, and the role of the allocation criterion.

The main contributions are as follows:
\begin{enumerate}[leftmargin=2em]
	\item \textbf{Task-conditioned Tucker rank allocation.}
	We formulate joint input- and output-channel rank selection
	for additive 3D Tucker adapters under a global parameter budget.
	Normalized reconstruction residuals of downstream proxy updates
	guide allocation, with exact parameter costs accounting for
	both channel factors and their shared core.
	
	\item \textbf{Theoretical support and exact optimization.}
	We relate layerwise reconstruction errors to an
	output-perturbation bound for the fixed proxy network,
	providing a theoretical basis for residual-based allocation.
	Dynamic programming yields a globally optimal solution
	over the specified discrete candidate rank pairs.
	
	\item \textbf{Evaluation of segmentation-to-progression transfer.}
	We evaluate the method on ADNI for MCI conversion prediction
	using hippocampal segmentation pretraining.
	Comparisons and ablations examine the effects of rank allocation
	and residual normalization. On the evaluated split, residual-guided
	Tucker achieves a higher mean area under the receiver operating
	characteristic curve (AUC) than fixed-ratio Tucker under the same
	adapter-budget ceiling.
\end{enumerate}

\section{Related Work}

\subsection{MRI-based progression prediction and medical transfer learning}

Magnetic resonance imaging (MRI) research on Alzheimer's disease encompasses
diagnostic classification \cite{wen2020reproducible} and the
prediction of future clinical progression. Basaia et al.\ investigate AD classification and MCI conversion prediction from a single MRI \cite{basaia2019classification},
whereas Spasov et al.\ combine imaging and non-imaging inputs
in a compact multitask model for conversion prediction
\cite{spasov2019conversion}.
Alongside these predictive models, anatomical supervision
has been explored for learning disease-related representations.
Liu et al.\ integrate hippocampal segmentation with
AD-related classification, connecting regional anatomical
features with subject-level diagnostic prediction
\cite{liu2020hippocampus}.

Volumetric pretraining provides representations that can
be transferred to downstream medical imaging tasks.
Med3D learns from segmentation supervision, while Models
Genesis uses self-supervised restoration to learn
transferable 3D features
\cite{chen2019med3d,zhou2019genesis}.
Building on pretrained representations, medical PEFT studies
investigate adaptation with a limited number of trainable
parameters \cite{dutt2024missed,lian2024less}.
Med-Tuning extends this direction to volumetric segmentation
through an efficient tuning framework
\cite{shen2024medtuning}.

\subsection{Low-rank adaptation, initialization, and rank allocation}

Evidence that pretrained language models can be fine-tuned
within low-dimensional parameter subspaces supports the
development of compact adaptation methods
\cite{aghajanyan2021intrinsic}.
Bottleneck adapters introduce trainable feature transformations,
whereas LoRA represents additive weight updates as products
of two low-rank factors \cite{houlsby2019adapters,hu2022lora}.
Subsequent work refines the parameterization and initialization
of these updates.
DoRA separates weight magnitude from direction
\cite{liu2024dora}, while PoLAR introduces a polar-inspired
representation and optimization scheme \cite{lion2025polar}.
PiSSA initializes adaptation from principal singular components
of pretrained weights \cite{meng2024pissa}.
LoRA-GA instead uses downstream gradients to align the initial
low-rank update direction with the full-fine-tuning gradient
\cite{wang2024loraga}.

Rank allocation determines how adaptation capacity is
distributed across layers under a parameter budget.
AdaLoRA adjusts ranks by pruning singular components
according to estimated importance \cite{zhang2023adalora}.
EVA combines activation-based initialization with rank
redistribution guided by explained variance
\cite{paischer2025eva}, while GoRA uses gradient information
to guide both rank allocation and initialization
\cite{he2025gora}.
These methods incorporate information from model parameters,
activations, or gradients into the selection of
layerwise adaptation capacity.

\subsection{Tensor-structured adaptation and budgeted approximation}

Tensor-structured adapters use factorization and parameter
sharing to construct compact trainable updates.
Compacter employs Kronecker-based hypercomplex layers,
while LoRETTA uses tensor-train parameterizations
\cite{mahabadi2021compacter,yang2024loretta}.
FacT tensorizes vision-transformer weights and factorizes
their increments into lightweight components
\cite{jie2023fact}.
FLoRA preserves the multidimensional structure of weight
updates \cite{si2024flora}, whereas TuckA combines Tucker
experts with hierarchical routing \cite{lei2026tucka}.
Applications to medical image segmentation include
tensor CUR adaptation in tCURLoRA \cite{he2026tcurlora}
and principal tensor singular-component tuning in
LoRA-PT \cite{he2025lorapt}.

Tensor compression provides approximation techniques
and optimization formulations relevant to rank allocation.
Tucker decomposition and higher-order singular value
decomposition (HOSVD) provide low-rank representations
along individual tensor modes
\cite{kolda2009tensor,deLathauwer2000}.
Kim et al.\ apply Tucker decomposition to convolutional
kernels \cite{kim2016compression}, while ADA-Tucker explores
adaptable tensorization through dimension adjustment
\cite{zhong2019adatucker}.
BATUDE incorporates Tucker rank selection into budget-constrained
compression \cite{yin2022batude}.
ROCKET combines reconstruction-error criteria with a
multiple-choice knapsack formulation and dynamic programming
for layerwise compression \cite{ali2026rocket}.

Our method formulates task-conditioned Tucker rank allocation
as a budget-constrained approximation problem for downstream
proxy updates. Normalized reconstruction residuals quantify the relative
approximation error of each candidate rank pair,
while exact parameter costs capture the coupling between
the two channel ranks. This formulation connects task-specific update structure with joint rank selection for 3D convolutional adapters,
while preserving the spatial kernel dimensions.

\section{Method}
\label{sec:method}

\subsection{Overview and notation}

Figure~\ref{fig:framework} outlines the three stages of
residual-guided Tucker.
First, calibration using only training data estimates
task-specific proxy updates for the selected convolutional kernels.
Second, candidate Tucker rank pairs are evaluated by normalized
reconstruction residuals and jointly selected under a global
adapter-parameter budget.
Finally, PEFT training starts from the original pretrained
checkpoint with newly initialized adapters at the selected ranks.

Let $\ell\in\{1,\ldots,L\}$ index the adapted layers and
$q$ index the candidate rank pairs within each layer.
The rank pair for layer $\ell$ is
$\br_\ell=(r_{\mathrm{out},\ell},r_{\mathrm{in},\ell})$,
and $\br=(\br_1,\ldots,\br_L)$ denotes the network-wide
rank configuration.
For the pretrained kernel $\cW_\ell$, we distinguish
the trainable adapter update $\Delta\cW_\ell$ from
the calibrated proxy update $\Delta\cW_\ell^{\mathrm{proxy}}$.
Both have the same dimensions as $\cW_\ell$, but the proxy
update is held fixed during rank selection and serves
as the approximation target in the analysis.
We denote the tensor Frobenius norm by $\|\cdot\|_F$,
the Euclidean norm of vectorized features or logits
by $\|\cdot\|_2$, and the operator norm induced by
the Euclidean norm by $\|\cdot\|_{\mathrm{op}}$.

\begin{table}[t]
	\centering
	\small
	\caption{Main notation for residual-guided Tucker allocation.}
	\label{tab:notation}
	\begin{tabular}{lp{0.66\linewidth}}
		\toprule
		Symbol & Meaning\\
		\midrule
		$\cW_\ell,\Delta\cW_\ell$
		& Pretrained kernel and trainable adapter update at layer $\ell$\\
		
		$\Delta\cW_\ell^{\mathrm{proxy}}$
		& Calibration-induced kernel update, held fixed during rank allocation\\
		
		$\br_\ell,\br_{\ell,q}$
		& Channel-rank pair at layer $\ell$ and its $q$th candidate pair\\
		
		$\br,\mathcal R$
		& Network-wide rank configuration and Cartesian product of
		layerwise candidate sets\\
		
		$\cH_{\boldsymbol{s}}(\cX)$
		& Channel-mode HOSVD approximation of $\cX$ at rank pair
		$\boldsymbol{s}$\\
		
		$\cE_\ell(\br_\ell),\cE_{\ell,q}$
		& Proxy-update reconstruction residual tensor and its
		candidate-indexed form\\
		
		$Z_\ell$
		& Proxy-update energy,
		$\|\Delta\cW_\ell^{\mathrm{proxy}}\|_F^2$\\
		
		$d_{\ell,q}$
		& Normalized squared reconstruction residual,
		$\|\cE_{\ell,q}\|_F^2/\max(Z_\ell,\varepsilon)$\\
		
		$S_{\ell,q}$
		& Candidate allocation score; $S_{\ell,q}=d_{\ell,q}$
		for the proposed method\\
		
		$p_{\ell,q},B_{\mathrm{adp}}$
		& Candidate adapter-parameter cost and total adapter-budget ceiling\\
		
		$\kappa_\ell(x),\bar\kappa_\ell$
		& Layerwise output-perturbation coefficient and its
		root-mean-square value over inputs\\
		
		$C$
		& Rank-independent coefficient in the normalized-residual
		perturbation bound\\
		\bottomrule
	\end{tabular}
\end{table}

\begin{figure}[t]
	\centering
	\includegraphics[width=\linewidth]{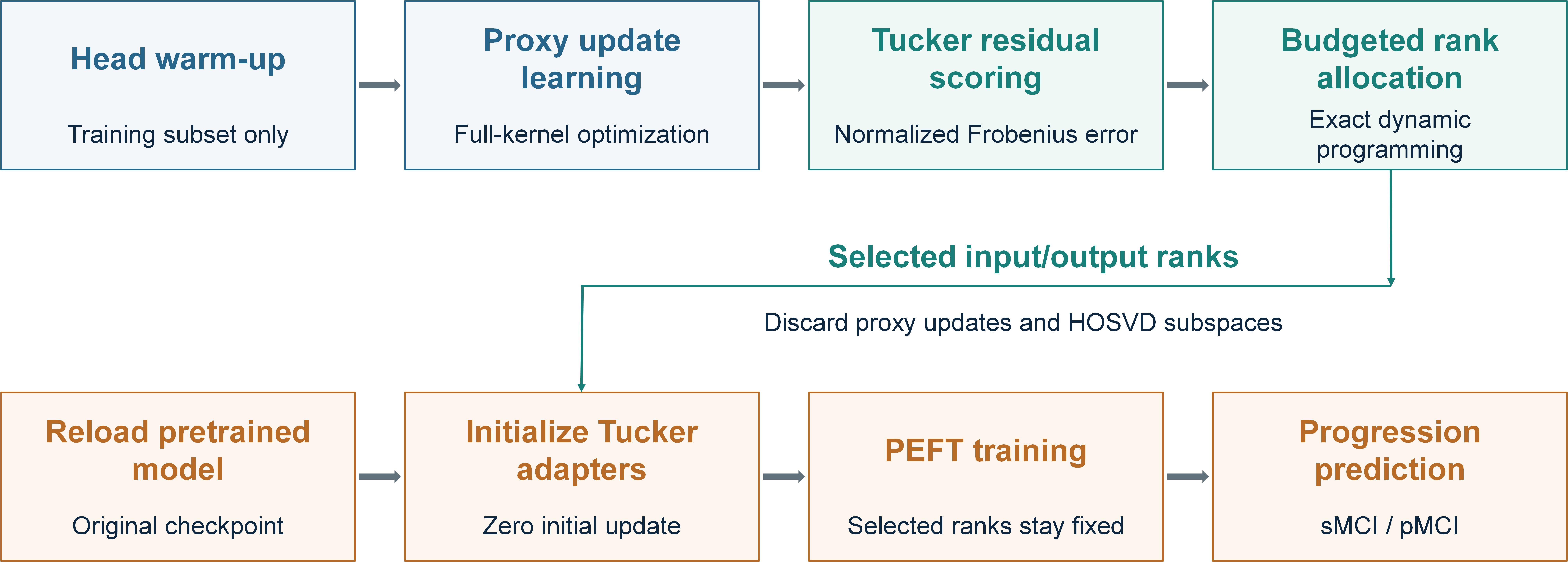}
	\caption{Overview of residual-guided Tucker.
		Task-specific proxy updates provide normalized reconstruction
		residuals for joint input- and output-channel rank allocation
		under a fixed adapter-parameter budget.
		Final PEFT training starts from the original pretrained
		checkpoint with adapters initialized to produce zero updates
		at the selected ranks.}
	\label{fig:framework}
\end{figure}

\subsection{Channel-mode Tucker adaptation and parameter budget}
\label{subsec:tucker-parameterization}

For each adapted layer $\ell$, the pretrained convolutional kernel is
\[
\cW_\ell\in
\R^{C_{\mathrm{out},\ell}\times C_{\mathrm{in},\ell}
	\times k_{\ell,1}\times k_{\ell,2}\times k_{\ell,3}},
\]
with spatial volume $K_\ell=k_{\ell,1}k_{\ell,2}k_{\ell,3}$.
We parameterize its additive update using a channel-mode Tucker
decomposition, retaining the original spatial dimensions:
\begin{equation}
	\Delta\cW_\ell
	=
	\cG_\ell
	\times_1 U_{\mathrm{out},\ell}
	\times_2 U_{\mathrm{in},\ell},
	\label{eq:tucker-update}
\end{equation}
where $\times_n$ denotes the mode-$n$ tensor--matrix product,
\[
U_{\mathrm{out},\ell}
\in\R^{C_{\mathrm{out},\ell}\times r_{\mathrm{out},\ell}},
\qquad
U_{\mathrm{in},\ell}
\in\R^{C_{\mathrm{in},\ell}\times r_{\mathrm{in},\ell}},
\]
and
\[
\cG_\ell\in
\R^{r_{\mathrm{out},\ell}\times r_{\mathrm{in},\ell}
	\times k_{\ell,1}\times k_{\ell,2}\times k_{\ell,3}}.
\]
The adapted kernel is $\cW_\ell+\Delta\cW_\ell$, with
$\cW_\ell$ frozen during PEFT.
The integer channel ranks satisfy
$1\le r_{\mathrm{out},\ell}\le C_{\mathrm{out},\ell}$ and
$1\le r_{\mathrm{in},\ell}\le C_{\mathrm{in},\ell}$.
They determine the factor widths and bound the corresponding
mode-unfolding ranks of the update, which may be lower.

The two channel factors and the Tucker core contain
\begin{equation}
	p_\ell(\br_\ell)
	=
	C_{\mathrm{out},\ell}r_{\mathrm{out},\ell}
	+
	C_{\mathrm{in},\ell}r_{\mathrm{in},\ell}
	+
	K_\ell r_{\mathrm{out},\ell}r_{\mathrm{in},\ell}
	\label{eq:tucker-cost}
\end{equation}
trainable parameters.
This expression counts all stored factor and core entries,
rather than the intrinsic degrees of freedom of the represented
update.
The bilinear core term couples the input- and output-channel
ranks, so their parameter costs must be considered jointly.
Across the adapted layers, rank selection is subject to
\begin{equation}
	\sum_{\ell=1}^{L}p_\ell(\br_\ell)\le B_{\mathrm{adp}},
	\label{eq:adapter-budget}
\end{equation}
where $B_{\mathrm{adp}}$ is the total adapter-parameter budget.

The total trainable parameter count additionally includes
the common components trained by all compared adapter methods.
Their contribution is fixed across these methods and is
specified in Section~\ref{subsec:network-training}.

\subsection{Proxy calibration and residual scoring}
\label{subsec:calibration}

We draw a class-balanced calibration subset $\cD_{\mathrm{cal}}$ from the downstream training partition. The classification head is first warmed up with the encoder frozen, allowing it to adapt to the pretrained representation before kernel calibration.

We then hold all other model parameters fixed and optimize the selected full convolutional kernels for $T_{\mathrm p}$ steps. The resulting displacement for layer $\ell$ is
\begin{equation}
\Delta\cW_\ell^{\mathrm{proxy}}
=\cW_\ell^{(T_{\mathrm p})}-\cW_\ell^{(0)}.
\label{eq:proxy-update}
\end{equation}
Here $\cW_\ell^{(t)}$ denotes the kernel after $t$ calibration steps, with $\cW_\ell^{(0)}=\cW_\ell$. The displacements provide task-conditioned references for rank scoring, without assuming convergence to optimal downstream updates. They are held fixed during candidate evaluation. Calibration settings are given in Section~\ref{subsec:calibration-settings}.

\paragraph{Channel-mode approximation}
\label{subsec:tucker-geometry}

To evaluate candidate ranks, we approximate each proxy update
along its two channel modes.
Let $\cX$ denote the proxy update
$\Delta\cW_\ell^{\mathrm{proxy}}$ at a given layer.
Choose orthonormal left-singular bases of its output- and
input-channel unfoldings, ordered by nonincreasing singular values.
Let $P_{\mathrm{out}}(r_{\mathrm{out}};\cX)$ and
$P_{\mathrm{in}}(r_{\mathrm{in}};\cX)$ denote the orthogonal
projectors onto the leading $r_{\mathrm{out}}$ and
$r_{\mathrm{in}}$ basis vectors, respectively.
Within subspaces associated with repeated singular values,
any orthonormal basis may be chosen, provided that it is
held fixed across candidate ranks.
For a rank pair
$\boldsymbol{s}=(r_{\mathrm{out}},r_{\mathrm{in}})$,
the channel-mode HOSVD approximation
\cite{kolda2009tensor,deLathauwer2000} is
\begin{equation}
	\cH_{\boldsymbol{s}}(\cX)
	=
	\cX\times_1P_{\mathrm{out}}(r_{\mathrm{out}};\cX)
	\times_2P_{\mathrm{in}}(r_{\mathrm{in}};\cX).
	\label{eq:two-mode-hosvd}
\end{equation}
For fixed $\cX$ and $\boldsymbol{s}$, the two mode projections
commute, and their composition is an orthogonal projection
with respect to the Frobenius inner product.

\paragraph{Normalized residual score}
For a rank pair $\br_\ell$, define
\begin{equation}
\cE_\ell(\br_\ell)
=\Delta\cW_\ell^{\mathrm{proxy}}
-\cH_{\br_\ell}(\Delta\cW_\ell^{\mathrm{proxy}}),
\qquad Z_\ell=\|\Delta\cW_\ell^{\mathrm{proxy}}\|_F^2.
\label{eq:global-residual}
\end{equation}
The normalized residual is
\begin{equation}
d_\ell(\br_\ell)
=\frac{\|\cE_\ell(\br_\ell)\|_F^2}{\max(Z_\ell,\varepsilon)},
\qquad \varepsilon>0.
\label{eq:relative-residual}
\end{equation}
Orthogonality gives $\|\cE_\ell(\br_\ell)\|_F^2=Z_\ell-\|\cH_{\br_\ell}(\Delta\cW_\ell^{\mathrm{proxy}})\|_F^2$, and hence $0\le d_\ell\le1$. For $Z_\ell\ge\varepsilon$, the score is exactly the fraction of discarded proxy energy. Here $\varepsilon$ is a small numerical floor. Below this floor the score is a stabilized residual, with value zero for a zero update. Normalization removes the direct energy-scale preference of an absolute residual; it does not estimate predictive importance or the fraction of task performance lost.

For candidate $\br_{\ell,q}$, we write $\cE_{\ell,q}=\cE_\ell(\br_{\ell,q})$ and $d_{\ell,q}=d_\ell(\br_{\ell,q})$. The allocation score is
\begin{equation}
S_{\ell,q}=d_{\ell,q}.
\label{eq:allocation-score}
\end{equation}
These scores compare candidate approximations on a common relative scale; their connection to network perturbation is established next.

\subsection{Approximation analysis}
\label{subsec:global-sensitivity}

Although channel-mode HOSVD does not necessarily yield the
best approximation at prescribed ranks, its reconstruction
error admits a bound in terms of the unfolding singular values.

\begin{lemma}[Channel-truncated HOSVD residual]
	\label{lem:hosvd-bound}
	Let $\{\sigma_{\mathrm{out},i}\}$ and
	$\{\sigma_{\mathrm{in},j}\}$ denote the singular values
	of the output- and input-channel unfoldings of $\cX$.
	Then
	\begin{equation}
		\left\|
		\cX-\cH_{(r_{\mathrm{out}},r_{\mathrm{in}})}(\cX)
		\right\|_F^2
		\le
		\sum_{i>r_{\mathrm{out}}}\sigma_{\mathrm{out},i}^2
		+
		\sum_{j>r_{\mathrm{in}}}\sigma_{\mathrm{in},j}^2.
		\label{eq:hosvd-tail-bound}
	\end{equation}
\end{lemma}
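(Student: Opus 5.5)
Write $P_1=P_{\mathrm{out}}(r_{\mathrm{out}};\cX)$ and $P_2=P_{\mathrm{in}}(r_{\mathrm{in}};\cX)$. Define the mode projections $\Pi_1(\cY)=\cY\times_1P_1$ and $\Pi_2(\cY)=\cY\times_2P_2$. Each is an orthogonal projection on the tensor space with the Frobenius inner product, since $P_n$ is symmetric and idempotent and mode products along different modes commute; this is the commutation fact stated after \eqref{eq:two-mode-hosvd}. The plan is to split the residual telescopically as
\[
\cX-\Pi_2\Pi_1\cX=(\cX-\Pi_1\cX)+\Pi_1(\cX-\Pi_2\cX),
\]
where we used $\Pi_1\Pi_2=\Pi_2\Pi_1$. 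We then show that the two terms are Frobenius-orthogonal and bound each one by the corresponding singular-value tail.

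For orthogonality, the first term lies in the range of $I-\Pi_1$ and the second lies in the range of $\Pi_1$. Since $\Pi_1$ is self-adjoint and idempotent, their inner product vanishes, and Pythagoras gives
\[
\|\cX-\cH_{(r_{\mathrm{out}},r_{\mathrm{in}})}(\cX)\|_F^2=\|\cX-\Pi_1\cX\|_F^2+\|\Pi_1(\cX-\Pi_2\cX)\|_F^2 .
\]
Because $\Pi_1$ is an orthogonal projection, it is nonexpansive, so the second term is at most $\|\cX-\Pi_2\cX\|_F^2$.

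Next we identify each single-mode residual with a tail sum. The Frobenius norm of a tensor equals that of any of its unfoldings, and mode-$n$ multiplication by $P_n$ corresponds to left multiplication of the mode-$n$ unfolding $X_{(n)}$ by $P_n$. Hence
\[
\|\cX-\Pi_1\cX\|_F^2=\|(I-P_1)X_{(1)}\|_F^2 .
\]
Here $P_1$ projects onto the leading $r_{\mathrm{out}}$ left singular vectors of $X_{(1)}$. Writing $X_{(1)}=\sum_i\sigma_{\mathrm{out},i}u_iv_i^\top$, the projection $I-P_1$ kills the leading terms and keeps the rest. The residual is therefore $\sum_{i>r_{\mathrm{out}}}\sigma_{\mathrm{out},i}u_iv_i^\top$, whose squared Frobenius norm is $\sum_{i>r_{\mathrm{out}}}\sigma_{\mathrm{out},i}^2$ because the $v_i$ are orthonormal. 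The same argument on the input-channel unfolding gives $\|\cX-\Pi_2\cX\|_F^2=\sum_{j>r_{\mathrm{in}}}\sigma_{\mathrm{in},j}^2$. Combining the three displays yields \eqref{eq:hosvd-tail-bound}.

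The main point needing care is repeated singular values. The tail identity still holds for any fixed choice of orthonormal basis inside a degenerate singular subspace, because $X_{(1)}$ maps each such subspace isometrically up to the common scale $\sigma$. This needs a brief remark rather than new work. The remaining steps are routine: that $\Pi_n$ is self-adjoint and that mode products commute both follow from the index definition of $\times_n$.
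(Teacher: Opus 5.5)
Your proof is correct and follows the paper's argument step for step: the same splitting $\cX-\Pi_2\Pi_1\cX=(\cX-\Pi_1\cX)+\Pi_1(\cX-\Pi_2\cX)$, Pythagoras from the orthogonality of $\Pi_1$, nonexpansiveness of $\Pi_1$, and the single-mode tail identities. The only difference is that you verify the tail identities by a direct SVD computation, including the repeated-singular-value case, where the paper simply cites Eckart--Young; yours is a slightly more careful justification of the same step.
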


The proof is given in Appendix~\ref{app:hosvd}.
The proposed allocator uses the actual HOSVD reconstruction
residual, while Eq.~\eqref{eq:hosvd-tail-bound} provides
its approximation-theoretic interpretation.

Consider a fixed calibration run with finite candidate sets
$\mathcal R_\ell$ and
$\mathcal R=\prod_{\ell=1}^{L}\mathcal R_\ell$.
Let $f^{\mathrm{proxy}}$ denote the evaluation-mode network
with kernels
$\cW_\ell+\Delta\cW_\ell^{\mathrm{proxy}}$.
For $\br\in\mathcal R$, let $\widehat f_{\br}$ denote
the network obtained by replacing each proxy update with
$\cH_{\br_\ell}(\Delta\cW_\ell^{\mathrm{proxy}})$.
Both networks share all other parameters, with fixed batch
normalization (BatchNorm) statistics and disabled dropout.
The truncated proxy network is used only for analysis and
is distinct from the model used for final PEFT training.

For each $\br\in\mathcal R$, consider intermediate networks
formed by replacing the adapted nodes one at a time in
topological order.
Assume that the common input feature at node $\ell$ has norm
at most $B_\ell(x)$ and that the downstream mapping is
$D_\ell$-Lipschitz with respect to the node output.
Let $\mathcal A_\ell[\cE]$ denote the convolution operator
induced by a kernel perturbation $\cE$.
For fixed input dimensions and convolution settings,
this mapping is linear and bounded, so a constant
$\beta_\ell$ exists such that
\begin{equation}
	\|\mathcal A_\ell[\cE]\|_{\mathrm{op}}
	\le\beta_\ell\|\cE\|_F.
	\label{eq:kernel-operator-compact}
\end{equation}

Define
\begin{equation}
	\kappa_\ell(x)=D_\ell B_\ell(x)\beta_\ell,
	\qquad
	\bar\kappa_\ell=
	\bigl(\mathbb E\kappa_\ell(X)^2\bigr)^{1/2}.
	\label{eq:compact-sensitivity}
\end{equation}
Here the expectation is over input volumes $X$,
conditional on the fixed calibration.
Because the candidate sets are finite, uniform bounds can
be established over the intermediate networks using the
Lipschitz properties of their constituent operations.
Residual connections are included in the downstream mapping,
with unaffected bypass inputs held fixed.
A finite second moment of $X$ ensures
$\bar\kappa_\ell<\infty$.
These bounds are independent of the selected rank configuration
and are not used as allocation weights.

\begin{theorem}[Output perturbation under Tucker approximation]
	\label{thm:task-sensitive-tucker}
	For a fixed calibrated proxy network, suppose that the
	input-feature bounds $B_\ell(x)$, downstream Lipschitz
	constants $D_\ell$, and convolution-operator bounds
	$\beta_\ell$ hold uniformly over all intermediate-network
	comparisons associated with $\br\in\mathcal R$.
	Then, for every $\br\in\mathcal R$,
	\begin{equation}
		\|f^{\mathrm{proxy}}(x)-\widehat f_{\br}(x)\|_2
		\le
		\sum_{\ell=1}^{L}
		\kappa_\ell(x)\|\cE_\ell(\br_\ell)\|_F.
		\label{eq:compact-output-bound}
	\end{equation}
	If $\bar\kappa_\ell<\infty$ for all $\ell$, then
	\begin{align}
		\mathbb E\|f^{\mathrm{proxy}}(X)-\widehat f_{\br}(X)\|_2^2
		&\le
		\left(
		\sum_{\ell=1}^{L}
		\bar\kappa_\ell\|\cE_\ell(\br_\ell)\|_F
		\right)^2
		\nonumber\\
		&\le
		L\sum_{\ell=1}^{L}
		\bar\kappa_\ell^2\|\cE_\ell(\br_\ell)\|_F^2.
		\label{eq:compact-expected-bound}
	\end{align}
	The expectation is conditional on the fixed calibration.
\end{theorem}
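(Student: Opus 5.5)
The plan is a hybrid (telescoping) argument over the intermediate networks. Fix $\br\in\mathcal R$. Order the adapted nodes topologically. Let $f^{(0)}=f^{\mathrm{proxy}}$, and let $f^{(\ell)}$ be the network in which the proxy updates at nodes $1,\ldots,\ell$ have been replaced by their truncations $\cH_{\br_m}(\Delta\cW_m^{\mathrm{proxy}})$, $m\le\ell$, with the remaining nodes unchanged. Then $f^{(L)}=\widehat f_{\br}$. The triangle inequality gives
\[
\|f^{\mathrm{proxy}}(x)-\widehat f_{\br}(x)\|_2\le\sum_{\ell=1}^{L}\|f^{(\ell-1)}(x)-f^{(\ell)}(x)\|_2 ,
\]
so it suffices to bound each consecutive difference by $\kappa_\ell(x)\|\cE_\ell(\br_\ell)\|_F$.

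For the $\ell$th term, note that $f^{(\ell-1)}$ and $f^{(\ell)}$ agree at every node preceding $\ell$ in topological order. They therefore feed the same input feature $h_\ell(x)$ to node $\ell$; any bypass or residual inputs that do not pass through node $\ell$ are held fixed, as stipulated. The two node outputs differ only through the kernel. Since convolution is linear in the kernel, the difference of node outputs is $\mathcal A_\ell[\cE_\ell(\br_\ell)]h_\ell(x)$, because
\[
(\cW_\ell+\Delta\cW_\ell^{\mathrm{proxy}})-(\cW_\ell+\cH_{\br_\ell}(\Delta\cW_\ell^{\mathrm{proxy}}))=\cE_\ell(\br_\ell).
\]
By Eq.~\eqref{eq:kernel-operator-compact} and the input bound, its norm is at most $\beta_\ell\|\cE_\ell(\br_\ell)\|_F B_\ell(x)$. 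The remaining part of both networks, namely the layers after node $\ell$, is the same map in the two comparisons: the later nodes are still in the proxy state in $f^{(\ell-1)}$ and the earlier nodes are already truncated in both. This map is $D_\ell$-Lipschitz in the output of node $\ell$. Hence the consecutive difference is at most $D_\ell B_\ell(x)\beta_\ell\|\cE_\ell(\br_\ell)\|_F=\kappa_\ell(x)\|\cE_\ell(\br_\ell)\|_F$. Summing over $\ell$ gives Eq.~\eqref{eq:compact-output-bound}.

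The delicate step is this middle one. One must check that the constants $B_\ell$ and $D_\ell$ legitimately apply to the specific intermediate networks $f^{(\ell-1)},f^{(\ell)}$ for the chosen $\br$. This is exactly what the uniformity hypothesis over all comparisons associated with $\br\in\mathcal R$ provides, and it is finite because $\mathcal R$ is finite. The write-up should also note that fixed BatchNorm statistics and disabled dropout make each intermediate network a deterministic function of $x$, so the Lipschitz statement is meaningful.

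For the expected bound, square Eq.~\eqref{eq:compact-output-bound}, take expectations, and apply Minkowski's inequality in $L^2$:
\[
\bigl(\mathbb E\|f^{\mathrm{proxy}}(X)-\widehat f_{\br}(X)\|_2^2\bigr)^{1/2}\le\Bigl(\mathbb E\Bigl(\sum_\ell\kappa_\ell(X)e_\ell\Bigr)^2\Bigr)^{1/2}\le\sum_\ell e_\ell\bigl(\mathbb E\kappa_\ell(X)^2\bigr)^{1/2}=\sum_\ell\bar\kappa_\ell e_\ell ,
\]
where $e_\ell=\|\cE_\ell(\br_\ell)\|_F$ is deterministic given the calibration. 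Squaring gives the first inequality of Eq.~\eqref{eq:compact-expected-bound}. The second follows from Cauchy--Schwarz, $(\sum_{\ell=1}^L a_\ell)^2\le L\sum_{\ell=1}^L a_\ell^2$ with $a_\ell=\bar\kappa_\ell e_\ell$. Finiteness of $\bar\kappa_\ell$ ensures that all expectations are finite.
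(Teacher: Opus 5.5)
Your proposal is correct and follows the paper's proof. The steps match: a telescoping sum over intermediate networks, a single-node bound using linearity of convolution in the kernel together with $\beta_\ell$, $B_\ell(x)$ and $D_\ell$, and then Minkowski and Cauchy--Schwarz for the expected bound.

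The only difference is the direction of the hybrid. You truncate nodes from the front (upstream truncated, downstream in proxy state), whereas the paper's $F^{(m)}_{\br}$ keeps nodes $1,\ldots,m$ untruncated and the rest truncated. This only moves the dependence on $\br$ from the downstream map to the input feature at node $\ell$, and the uniformity hypothesis covers both choices.
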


\begin{proof}
	Fix $\br\in\mathcal R$ and order the adapted nodes
	topologically.
	Let $F^{(m)}_{\br}$ use untruncated proxy updates at
	nodes $1,\ldots,m$ and truncated updates at the remaining
	nodes, with all other parameters fixed.
	Then $F^{(L)}_{\br}=f^{\mathrm{proxy}}$,
	$F^{(0)}_{\br}=\widehat f_{\br}$, and
	\begin{equation}
		f^{\mathrm{proxy}}(x)-\widehat f_{\br}(x)
		=
		\sum_{\ell=1}^{L}
		\bigl(
		F^{(\ell)}_{\br}(x)
		-
		F^{(\ell-1)}_{\br}(x)
		\bigr).
		\label{eq:network-telescope}
	\end{equation}
	
	Each adjacent pair differs only at node $\ell$ and shares
	the same input feature map $z_\ell(x)$, satisfying
	$\|z_\ell(x)\|_2\le B_\ell(x)$.
	The convolution outputs differ by
	$\mathcal A_\ell[\cE_\ell(\br_\ell)]z_\ell(x)$.
	Using the downstream Lipschitz bound gives
	\begin{align}
		\|F^{(\ell)}_{\br}(x)-F^{(\ell-1)}_{\br}(x)\|_2
		&\le
		D_\ell
		\|\mathcal A_\ell[\cE_\ell(\br_\ell)]z_\ell(x)\|_2
		\nonumber\\
		&\le
		D_\ell\beta_\ell B_\ell(x)
		\|\cE_\ell(\br_\ell)\|_F.
		\label{eq:single-node-bound}
	\end{align}
	Summing over layers proves
	Eq.~\eqref{eq:compact-output-bound}.
	Since the residual tensors are fixed after calibration,
	Minkowski's inequality yields
	\[
	\left(
	\mathbb E
	\|f^{\mathrm{proxy}}(X)-\widehat f_{\br}(X)\|_2^2
	\right)^{1/2}
	\le
	\sum_{\ell=1}^{L}
	\bar\kappa_\ell\|\cE_\ell(\br_\ell)\|_F.
	\]
	Squaring and applying Cauchy--Schwarz gives
	Eq.~\eqref{eq:compact-expected-bound}.
\end{proof}

\begin{corollary}[Uniform bound for the unweighted relative residual]
	\label{cor:unweighted}
	Let
	\begin{equation}
		C=
		\max_{1\le\ell\le L}
		\left\{
		\bar\kappa_\ell^2\max(Z_\ell,\varepsilon)
		\right\}.
		\label{eq:uniform-coefficient}
	\end{equation}
	Then $C$ is independent of $\br\in\mathcal R$, and
	\begin{equation}
		\mathbb E
		\|f^{\mathrm{proxy}}(X)-\widehat f_{\br}(X)\|_2^2
		\le
		LC\sum_{\ell=1}^{L}d_\ell(\br_\ell).
		\label{eq:unweighted-output-bound}
	\end{equation}
\end{corollary}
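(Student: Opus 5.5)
The corollary follows directly from the second inequality in Eq.~\eqref{eq:compact-expected-bound} of Theorem~\ref{thm:task-sensitive-tucker}. The plan is to rewrite each squared residual $\|\cE_\ell(\br_\ell)\|_F^2$ as a normalized residual times its normalizer, and then pull a single layer-independent constant out of the sum.

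First, independence of $C$ from $\br$. The proxy energies $Z_\ell=\|\Delta\cW_\ell^{\mathrm{proxy}}\|_F^2$ depend only on the fixed calibration. By the discussion preceding the theorem, the coefficients $\bar\kappa_\ell$ are built from bounds $B_\ell(x)$, $D_\ell$, $\beta_\ell$ that hold uniformly over all intermediate-network comparisons for $\br\in\mathcal R$, so they do not vary with $\br$. Hence $C$ in Eq.~\eqref{eq:uniform-coefficient} is a maximum of $L$ rank-independent quantities. It is finite because each $\bar\kappa_\ell<\infty$, which the theorem's second part assumes.

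Second, the identity. By Eq.~\eqref{eq:relative-residual},
\[
\|\cE_\ell(\br_\ell)\|_F^2=\max(Z_\ell,\varepsilon)\,d_\ell(\br_\ell)
\]
holds exactly for every $\ell$ and every candidate; the denominator is strictly positive since $\varepsilon>0$. Substituting this into the right-hand side of Eq.~\eqref{eq:compact-expected-bound} gives
\[
L\sum_{\ell=1}^{L}\bar\kappa_\ell^2\max(Z_\ell,\varepsilon)\,d_\ell(\br_\ell).
\]
Since $d_\ell(\br_\ell)\ge 0$, each coefficient $\bar\kappa_\ell^2\max(Z_\ell,\varepsilon)$ may be replaced by its maximum $C$ without decreasing the sum. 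This yields Eq.~\eqref{eq:unweighted-output-bound}.

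There is no serious obstacle; the argument is essentially a substitution. The only point needing care is to confirm that the rank-independence of $\bar\kappa_\ell$ is genuinely supplied by the uniformity hypothesis of Theorem~\ref{thm:task-sensitive-tucker}, rather than silently assumed. It is worth also noting the regime $Z_\ell<\varepsilon$: there the identity still holds, with $d_\ell$ understood as the stabilized residual.
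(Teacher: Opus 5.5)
Your proposal is correct and follows the paper's proof essentially verbatim: rewrite $\|\cE_\ell(\br_\ell)\|_F^2=\max(Z_\ell,\varepsilon)\,d_\ell(\br_\ell)$, substitute it into the second inequality of Eq.~\eqref{eq:compact-expected-bound}, and bound each coefficient by $C$. Your justification that $C$ does not depend on $\br$ (via the uniformity hypothesis of Theorem~\ref{thm:task-sensitive-tucker}), together with your use of $d_\ell(\br_\ell)\ge 0$, makes explicit points that the paper leaves implicit.
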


\begin{proof}
	By Eq.~\eqref{eq:relative-residual},
	\[
	\|\cE_\ell(\br_\ell)\|_F^2
	=
	\max(Z_\ell,\varepsilon)d_\ell(\br_\ell).
	\]
	Consequently,
	\begin{align*}
		L\sum_{\ell=1}^{L}
		\bar\kappa_\ell^2\|\cE_\ell(\br_\ell)\|_F^2
		&=
		L\sum_{\ell=1}^{L}
		\bar\kappa_\ell^2
		\max(Z_\ell,\varepsilon)d_\ell(\br_\ell)\\
		&\le
		LC\sum_{\ell=1}^{L}d_\ell(\br_\ell).
	\end{align*}
	Combining this with
	Eq.~\eqref{eq:compact-expected-bound}
	proves the result.
\end{proof}

For fixed calibration and candidate sets,
Eq.~\eqref{eq:unweighted-output-bound} shows that minimizing
the sum of normalized residuals also minimizes the stated
upper bound on the expected squared output discrepancy
between the proxy network and its Tucker approximation.
This provides theoretical justification for the proposed
allocation objective.
The effect of residual normalization is further evaluated
in Section~\ref{subsec:normalization-ablation}.

\subsection{Budget-constrained rank allocation and final training}
\label{subsec:allocation-score}
\label{subsec:budget-allocation}
\label{subsec:final-training}

We construct candidates around a reference allocation with channel ratio $\rho\in(0,1]$. Its layerwise anchor ranks are
\begin{equation}
r^0_{\mathrm{out},\ell}
=
\max\left(1,\left\lceil\rho C_{\mathrm{out},\ell}\right\rceil\right),
\qquad
r^0_{\mathrm{in},\ell}
=
\max\left(1,\left\lceil\rho C_{\mathrm{in},\ell}\right\rceil\right).
\label{eq:anchor-ranks}
\end{equation}
Candidate rank pairs are formed by scaling the two anchors independently with a finite set of positive multipliers $\mathcal M$ and mapping the results to valid integer ranks. The candidate set includes the anchor pair and assigns positive ranks to every selected layer. We set the shared budget to the cost of the reference allocation,
\[
B_{\mathrm{adp}}=\sum_{\ell=1}^{L}p_\ell(\br_\ell^0),
\qquad
\br_\ell^0=(r^0_{\mathrm{out},\ell},r^0_{\mathrm{in},\ell}).
\]
This construction compares the adaptive and fixed-ratio allocations under the same ceiling. The values of $\rho$, $\mathcal M$, and the resulting budget are given in Section~\ref{subsec:calibration-settings}.

Using the scores in Eq.~\eqref{eq:allocation-score}, we associate each candidate with its exact parameter cost. For candidate $q$ in layer $\ell$, set $p_{\ell,q}=p_\ell(\br_{\ell,q})$ using Eq.~\eqref{eq:tucker-cost}. A candidate is Pareto-dominated if another candidate in the same layer has no greater cost and no greater score, with at least one strict inequality. Dominated candidates are removed, and one representative is retained for identical cost--score pairs. Reindex the retained candidates as $q=1,\ldots,Q_\ell$. The discrete allocation problem is
\begin{equation}
\min_{q_1,\ldots,q_L}
\sum_{\ell=1}^{L}S_{\ell,q_\ell}
\quad
\text{s.t.}
\quad
\sum_{\ell=1}^{L}p_{\ell,q_\ell}\le B_{\mathrm{adp}}.
\label{eq:mckp}
\end{equation}
Selecting one candidate per layer under a shared budget gives a multiple-choice knapsack problem. Integer parameter costs permit an exact budget-indexed dynamic program. The anchor configuration is feasible before pruning because its cost defines $B_{\mathrm{adp}}$; dominance pruning preserves feasibility even if an anchor is removed.

\begin{proposition}[Exact budgeted allocation]
	\label{prop:dp}
	Assume that $p_{\ell,q}$ and $B_{\mathrm{adp}}$ are nonnegative
	integers and that the finite candidate table admits a feasible allocation.
	For integer $b\in\{0,\ldots,B_{\mathrm{adp}}\}$, let $V(\ell,b)$
	be the minimum cumulative score obtained by selecting exactly one
	candidate from each of the first $\ell$ layers, with total cost
	at most $b$.
	Set $V(0,b)=0$, assign $+\infty$ to infeasible states, and define
	the minimum over an empty set as $+\infty$.
	Then, for $\ell=1,\ldots,L$,
	\begin{equation}
		V(\ell,b)
		=
		\min_{\substack{1\le q\le Q_\ell\\p_{\ell,q}\le b}}
		\left\{
		V(\ell-1,b-p_{\ell,q})+S_{\ell,q}
		\right\}.
		\label{eq:dp}
	\end{equation}
	Recording a minimizing candidate at each finite state with $\ell\ge1$
	and backtracking from $V(L,B_{\mathrm{adp}})$ returns a globally
	optimal solution of Eq.~\eqref{eq:mckp} over the retained discrete
	candidate sets.
\end{proposition}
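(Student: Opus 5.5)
The argument is a standard induction on $\ell$, followed by a backtracking argument. For each $\ell\in\{0,\ldots,L\}$ and integer $b\in\{0,\ldots,B_{\mathrm{adp}}\}$, let $\Phi(\ell,b)$ be the set of tuples $(q_1,\ldots,q_\ell)$ with $q_k\in\{1,\ldots,Q_k\}$ and $\sum_{k\le\ell}p_{k,q_k}\le b$. By definition $V(\ell,b)=\min_{\Phi(\ell,b)}\sum_{k\le\ell}S_{k,q_k}$, with value $+\infty$ when $\Phi(\ell,b)$ is empty. The set $\Phi(0,b)$ contains only the empty tuple, since $b\ge0$. This gives $V(0,b)=0$, which matches the initialization. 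Every set $\Phi(\ell,b)$ is finite because the candidate tables are finite, so every minimum is attained whenever it is finite.

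For the recursion in Eq.~\eqref{eq:dp}, fix $\ell\ge1$ and $b$, and split $\Phi(\ell,b)$ according to the last choice $q_\ell=q$. Because all costs are nonnegative, a tuple in $\Phi(\ell,b)$ with $q_\ell=q$ exists only if $p_{\ell,q}\le b$. Under that condition, its prefix $(q_1,\ldots,q_{\ell-1})$ lies in $\Phi(\ell-1,b-p_{\ell,q})$. Conversely, appending $q$ to any tuple in $\Phi(\ell-1,b-p_{\ell,q})$ produces a tuple in $\Phi(\ell,b)$. The map is therefore a bijection between $\{\phi\in\Phi(\ell,b):q_\ell=q\}$ and $\Phi(\ell-1,b-p_{\ell,q})$. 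Under it, the objective equals the prefix objective plus $S_{\ell,q}$. Minimizing first within each block and then over the admissible $q$ gives Eq.~\eqref{eq:dp}. The convention $\min\emptyset=+\infty$ and the rule $+\infty+S=+\infty$ handle empty blocks and infeasible states. Integrality of the costs guarantees that each index $b-p_{\ell,q}$ is again an integer in $\{0,\ldots,B_{\mathrm{adp}}\}$, so the table is closed under the recursion.

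For optimality of backtracking, note that feasibility of the candidate table gives $V(L,B_{\mathrm{adp}})<\infty$, and $V(L,B_{\mathrm{adp}})$ is exactly the optimal value of Eq.~\eqref{eq:mckp}. Starting from $(L,B_{\mathrm{adp}})$, take the recorded minimizer $q_L^*$ and move to the state $(L-1,B_{\mathrm{adp}}-p_{L,q_L^*})$. That state is finite, since $V(L,\cdot)=V(L-1,\cdot)+S_{L,q_L^*}$ and the scores are finite. Continuing down to $\ell=0$ shows, by a second induction, that the recovered tuple satisfies two properties. Its cost is at most $B_{\mathrm{adp}}$, because the residual budget stays nonnegative at each step. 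Its score sum telescopes to $V(L,B_{\mathrm{adp}})$. It is therefore a global optimum over the retained sets.

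The only subtle step is the bijection in the recursion. It relies on nonnegative integer costs, which keep every residual budget index valid, and on the careful use of the $+\infty$ conventions. Everything else is routine bookkeeping.
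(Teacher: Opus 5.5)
Your proof is correct and follows essentially the same route as the paper: you split feasible allocations by the last-layer candidate to obtain the recursion (your explicit bijection with budget-reduced prefixes is a tidier form of the paper's lower-bound-plus-attainment argument), then backtrack through recorded minimizers. The paper additionally notes that Pareto pruning preserves the optimal value relative to the unpruned candidates, which the statement, phrased over the retained sets, does not require.
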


\begin{proof}
	Any feasible allocation for the first $\ell$ layers selects a
	candidate $q$ at layer $\ell$, leaving a budget of
	$b-p_{\ell,q}$ for the preceding layers.
	Its cumulative score is therefore at least
	$V(\ell-1,b-p_{\ell,q})+S_{\ell,q}$.
	Conversely, whenever this value is finite, an optimal allocation
	for the preceding layers combined with candidate $q$ attains it.
	Minimizing over all candidates with $p_{\ell,q}\le b$ proves
	Eq.~\eqref{eq:dp}.
	
	Starting from $V(0,b)=0$, induction on $\ell$ establishes
	optimality for every state.
	Backtracking through the recorded minimizing candidates recovers
	an allocation attaining $V(L,B_{\mathrm{adp}})$.
	Pareto pruning preserves the optimal objective value because
	any removed candidate can be replaced by a retained candidate
	from the same layer with no greater cost or score.
\end{proof}

The dense dynamic program has pseudo-polynomial time complexity
\[
O\!\left(B_{\mathrm{adp}}\sum_{\ell=1}^{L}Q_\ell\right).
\]
It requires $O(B_{\mathrm{adp}})$ memory for objective values
and $O(LB_{\mathrm{adp}})$ for backtracking.
Our implementation uses sparse integer-cost states and prunes
dominated cost--score states after each layer, preserving the
optimal objective value.

\paragraph{Final PEFT training}

After rank allocation, the calibration model is discarded,
and final training starts from the original pretrained checkpoint.
Tucker adapters are initialized with Xavier-uniform channel factors
and zero-initialized cores, ensuring zero initial updates.
The adapters and common trainable components
(Section~\ref{subsec:network-training}) are then jointly optimized,
while the adapted pretrained kernels remain frozen.
The selected ranks remain fixed throughout training,
whereas the factor subspaces are learned independently of
the HOSVD bases.
The same initialization scheme is used for residual-guided
and fixed-ratio Tucker.

Algorithm~\ref{alg:method} summarizes the complete procedure.
Calibration and rank allocation use only training data,
while checkpoint and decision-threshold selection rely
exclusively on validation data.

\begin{algorithm}[t]
	\caption{Residual-guided Tucker rank allocation and training}
	\label{alg:method}
	\small
	\textbf{Input:} Pretrained checkpoint, training and validation
	partitions, candidate rank sets, and adapter budget
	$B_{\mathrm{adp}}$.\par
	\textbf{Output:} Selected ranks, trained adapters and common
	components, and a decision threshold.
	\begin{enumerate}[leftmargin=1.7em,itemsep=0.15em,topsep=0.4em]
		\item Sample a class-balanced calibration subset from the training
		partition and warm up the classification head with the encoder frozen.
		
		\item With all other parameters fixed, optimize the selected full
		kernels on the calibration subset to obtain
		$\Delta\cW_\ell^{\mathrm{proxy}}$.
		
		\item Evaluate candidate parameter costs using
		Eq.~\eqref{eq:tucker-cost} and normalized HOSVD residual scores
		using Eq.~\eqref{eq:allocation-score}.
		Remove duplicate and dominated candidates.
		
		\item Solve Eq.~\eqref{eq:mckp} by dynamic programming.
		\begin{enumerate}[label=(\alph*),leftmargin=1.7em,
			itemsep=0.1em,topsep=0.1em]
			\item Initialize $V(0,b)=0$ for all integer budgets
			$b=0,\ldots,B_{\mathrm{adp}}$.
			Set all states with $\ell\geq 1$ to $+\infty$.
			
			\item For $\ell=1,\ldots,L$, update $V(\ell,b)$
			using Eq.~\eqref{eq:dp} for each budget $b$.
			At each finite state, store a minimizing candidate
			index as $A(\ell,b)$.
			
			\item Set $b=B_{\mathrm{adp}}$ and backtrack
			for $\ell=L,\ldots,1$.
			Recover $q_\ell^*=A(\ell,b)$ and the selected
			rank pair $\br_\ell^*=\br_{\ell,q_\ell^*}$,
			then update $b\leftarrow b-p_{\ell,q_\ell^*}$.
		\end{enumerate}
		
		\item Discard the calibration model and reload the original
		pretrained checkpoint. Initialize adapters at the selected ranks
		with random channel factors and zero cores.
		
		\item Train the adapters and common components on the training
		partition, keeping the selected ranks and adapted pretrained
		kernels fixed. Select the checkpoint by validation AUC and
		determine its decision threshold on the validation set.
	\end{enumerate}
\end{algorithm}

\section{Experimental Setup}
\label{sec:experiments}

\subsection{Pretraining dataset and downstream cohort}

\paragraph{Hippocampal segmentation dataset}
Source pretraining uses the EADC--ADNI dataset, where EADC denotes
the European Alzheimer's Disease Consortium. The dataset contains
$135$ T1-weighted MRI scans with manual hippocampal annotations
based on the Harmonized Hippocampal Protocol (HarP) \cite{boccardi2015training}.
After excluding five scans with inconsistent annotations, the
remaining $130$ volumes are divided into $90$ training and
$40$ validation volumes. All downstream methods use the same
pretrained segmentation checkpoint. The network architecture
and pretraining protocol are described in
Section~\ref{subsec:network-training}.

\paragraph{MCI conversion cohort}
The downstream task is to predict conversion from MCI to AD
within $36$ months using baseline T1-weighted MRI.
Imaging data and longitudinal diagnoses are obtained from the
Alzheimer's Disease Neuroimaging Initiative (ADNI), a multicenter
longitudinal study (\url{https://adni.loni.usc.edu})
\cite{jack2008adni,adniDataAccess}.

Each participant contributes one baseline T1-weighted MRI scan.
Participants diagnosed with AD within $36$ months are classified
as progressive MCI (pMCI), the positive class, whereas stable
MCI (sMCI) is defined by at least $36$ months of follow-up
without conversion to AD.
The resulting cohort comprises $569$ participants
($305$ sMCI and $264$ pMCI), including $248$ from ADNI1,
$233$ from ADNI2, and $88$ from ADNI3.
Baseline characteristics are summarized in
Table~\ref{tab:cohort-characteristics}.

\begin{table}[t]
	\centering
	\small
	\caption{Baseline characteristics of the ADNI cohort.}
	\label{tab:cohort-characteristics}
	\renewcommand{\arraystretch}{1.15}
	\begin{tabular*}{\linewidth}{@{\extracolsep{\fill}}lccc@{}}
		\toprule
		Characteristic
		& \shortstack{sMCI\\$n=305$}
		& \shortstack{pMCI\\$n=264$}
		& \shortstack{Total\\$n=569$}\\
		\midrule
		Age (years), mean $\pm$ SD\textsuperscript{a}
		& $71.69 \pm 7.63$
		& $73.93 \pm 7.07$
		& $72.73 \pm 7.45$\\
		Female, $n$ (\%)
		& 120 (39.34) & 105 (39.77) & 225 (39.54)\\
		Male, $n$ (\%)
		& 185 (60.66) & 159 (60.23) & 344 (60.46)\\
		\bottomrule
	\end{tabular*}
	
	\smallskip
	\begin{minipage}{\linewidth}
		\footnotesize
		\textsuperscript{a}Age data are available for
		304 sMCI and 264 pMCI participants.
		SD denotes standard deviation.
	\end{minipage}
\end{table}

\subsection{Preprocessing and data split}

Both pretraining and downstream MRI scans undergo skull stripping,
affine registration to the Montreal Neurological Institute 152
(MNI152) standard brain template, and resampling to $1\ \mathrm{mm}$
isotropic resolution on a $182\times218\times182$ grid.
For segmentation pretraining, intensities are normalized using
the mean and standard deviation of positive-intensity voxels,
followed by random cropping to $128\times128\times128$ voxels.
Random flipping and intensity augmentation are applied during
training. For downstream classification, volumes are center-cropped
to $128\times128\times128$ voxels, with zero padding where necessary.
Intensities are then normalized using the mean and standard
deviation of nonzero voxels, with $10^{-8}$ added to the
denominator while preserving the zero-valued background.

The dataset is divided into training, validation, and test sets
at a ratio of $60{:}20{:}20$.
The training set contains $341$ participants
($183$ sMCI and $158$ pMCI), while the validation and test
sets each contain $114$ participants
($61$ sMCI and $53$ pMCI).
Calibration and model training use the training set,
while checkpoint and decision-threshold selection use
the validation set. The test set is reserved for final evaluation.

\subsection{Network and training protocol}
\label{subsec:network-training}

The downstream model reuses the encoder and bottleneck of a
3D residual U-Net pretrained for hippocampal segmentation
\cite{ronneberger2015unet,he2016resnet}.
The backbone processes single-channel $128\times128\times128$
volumes with channel widths of $32$, $64$, $128$, $256$, and $512$.
It consists of seven residual double-convolution blocks,
each with two bias-free $3\times3\times3$ convolutions,
and five channel-changing shortcuts using $1\times1\times1$
convolutions followed by BatchNorm.
The segmentation decoder is replaced by global average pooling
and a two-layer classification head that maps 512 features
to two output logits through 128 hidden units.
The rectified linear unit (ReLU) activation is applied after the
hidden layer, with dropout ($p=0.2$)
before each fully connected layer.

All adapter methods target the $14$ main-path
$3\times3\times3$ convolutions, whose pretrained weights
remain frozen. The adapters are trained jointly with
the classification head, five shortcut convolutions,
and BatchNorm affine parameters, which together contribute
$250{,}082$ common trainable parameters.

\paragraph{Source pretraining}
The segmentation network is trained from scratch for $500$ epochs
using Adam with a Dice-based loss, a batch size of $2$,
an initial learning rate of $10^{-4}$, and weight decay
of $2\times10^{-5}$.

\paragraph{Downstream training}
All methods are trained for $100$ epochs using Adam with
a batch size of $2$ and a base learning rate of $10^{-4}$.
Class-weighted cross-entropy is used, with weights calculated
from the training set. Weight decay is set to $10^{-5}$
for weight tensors and zero for biases and BatchNorm affine
parameters. Training uses automatic mixed precision (AMP)
with dynamic gradient scaling, without data augmentation.

BatchNorm running statistics are frozen during PEFT,
whereas full fine-tuning updates all $42.69$ million model
parameters and BatchNorm running statistics.
Each method is evaluated over five random seeds
($42$, $3407$, $2024$, $2025$, and $2026$).
The checkpoint with the highest validation AUC is selected
from each run, subject to the method-specific rules in
Section~\ref{subsec:comparisons}.

\subsection{Calibration and rank-allocation settings}
\label{subsec:calibration-settings}

For each run, a class-balanced subset of $68$ training
participants ($34$ per class, approximately $20\%$ of the
training set) is used for calibration.
The classification head is first warmed up for $100$ steps
with a batch size of $2$, followed by two epochs of proxy
optimization with a batch size of $1$ ($T_{\mathrm p}=136$ steps).
Both stages minimize unweighted cross-entropy using Adam
with a learning rate of $10^{-4}$ and zero weight decay.
Head dropout is enabled during warm-up but disabled during
proxy optimization, while encoder BatchNorm running
statistics remain frozen throughout calibration.

Candidate ranks are constructed as described in
Section~\ref{subsec:budget-allocation}, using a reference
ratio $\rho=1/16$ and multipliers
\begin{equation}
	\mathcal M=\{0.5,0.75,1.0,1.5,2.0\}.
	\label{eq:rank-multipliers}
\end{equation}
Input- and output-channel anchor ranks are scaled independently,
rounded to the nearest integer (with half-integer ties rounded
upward), and clipped to the corresponding channel dimensions.
The adapter budget is set to $B_{\mathrm{adp}}=367{,}819$,
matching fixed-ratio Tucker.
Including the common trainable components, the total
trainable-parameter ceiling is approximately $0.618$ million
($1.45\%$ of the full model).
The numerical floor in Eq.~\eqref{eq:relative-residual}
is set to $\varepsilon=10^{-12}$.

Calibration and rank allocation are performed independently
for each random seed. Before final PEFT training, the seed
is reset to its run-specific value. Both residual-guided
and fixed-ratio Tucker use Xavier-uniform initialization
for channel factors and zero initialization for Tucker cores.

\subsection{Comparative methods and ablation controls}
\label{subsec:comparisons}

We compare residual-guided Tucker with full fine-tuning,
four matrix-based PEFT methods, and fixed-ratio Tucker.
The matrix-based baselines include LoRA \cite{hu2022lora},
which uses low-rank additive updates; AdaLoRA
\cite{zhang2023adalora}, which dynamically allocates ranks
based on component importance; PiSSA \cite{meng2024pissa},
which initializes low-rank factors from principal singular
components while retaining a frozen residual; and PoLAR
\cite{lion2025polar}, which uses direction factors and a
dense scale matrix with landing-field optimization.
Fixed-ratio Tucker uses the parameterization in
Eq.~\eqref{eq:tucker-update} with $\rho=1/16$,
providing a direct comparison under the same
adapter-parameter budget.

All matrix-based methods unfold each selected kernel into
a $C_{\mathrm{out}}\times(C_{\mathrm{in}}K)$ matrix.
LoRA, PiSSA, and PoLAR use rank $4$ to obtain parameter
counts comparable to those of the Tucker methods.
All adapter methods target the same convolutional kernels
and optimize the common trainable components defined in
Section~\ref{subsec:network-training}.
They follow the same training protocol without
method-specific hyperparameter search.

AdaLoRA uses a parameter-cost budget based on fixed-ratio
Tucker and retains an importance-ranked subset of components,
allowing a small budget overshoot.
Its reported parameter count corresponds to the final compact
allocation, although the training model and optimizer retain
their initial tensor shapes.
Checkpoint selection is restricted to epochs after rank
allocation is finalized.
Further implementation details are provided in
Appendix~\ref{app:baseline-implementation}.

Ablation controls include head-only tuning and common-only
tuning, the latter optimizing the classification head,
channel-changing shortcuts, and BatchNorm affine parameters
without adapters.
To evaluate residual normalization, we also consider an
absolute-residual variant that uses squared Frobenius
reconstruction errors without normalization by proxy-update
energy. All other settings, including calibration, candidate
construction, parameter budget, initialization, and final
training, are identical to those of residual-guided Tucker.

\subsection{Evaluation metrics and efficiency protocol}
\label{subsec:evaluation-protocol}

Predictive performance is evaluated using the area under the
receiver operating characteristic curve (AUC), accuracy (ACC),
F1 score, sensitivity (SEN), and specificity (SPE).
Results are reported as the mean and sample standard deviation
across five random seeds on the same test cohort.
For threshold-dependent metrics, the decision threshold
maximizes validation specificity subject to
\begin{equation}
	\mathrm{SEN}_{\mathrm{val}}(t)\ge0.70.
	\label{eq:val-sen70}
\end{equation}
The selected threshold is applied unchanged to the test set.

Computational efficiency is evaluated using trainable parameter
counts, floating-point operation counts (FLOPs) for forward passes,
peak training memory, and test-set processing time.
Peak training memory and test time are measured
on an NVIDIA GeForce RTX 4090 D graphics processing unit (GPU)
with a batch size of $2$.
Cold-forward FLOPs include adapter reconstruction, whereas
warm-forward FLOPs reuse cached updates.
Test time includes data loading and transfer.
AdaLoRA is evaluated using its final compact configuration.
Calibration and initialization are excluded from these benchmarks,
so the reported memory reflects final-model training rather
than the complete adaptation process.

For residual-guided Tucker, preparation time is measured
separately, covering model setup, checkpoint loading,
calibration, candidate evaluation, rank allocation, and
final-model initialization. The measured interval also includes
an auxiliary squared-gradient computation not used for
rank allocation.

\section{Results}

\subsection{Predictive performance}

Residual-guided Tucker achieved the highest mean AUC among the
compared methods ($0.7706$; Table~\ref{tab:main} and
Figure~\ref{fig:parameter-auc}), exceeding
fixed-ratio Tucker by $4.67$ percentage points and full fine-tuning
by $2.76$ points. The comparison with fixed-ratio Tucker used
the same update parameterization and adapter-budget ceiling,
supporting the benefit of residual-guided rank allocation in the
evaluated setting. The proposed method also achieved a higher mean
AUC than the matrix-adapter baselines, whose mean AUCs ranged
from $0.7203$ to $0.7301$.
It used approximately $0.618$ million trainable
parameters, corresponding to $1.45\%$ of the full model.

For threshold-dependent metrics, residual-guided Tucker achieved
the highest mean ACC ($0.7035$), F1 score ($0.6763$), and
sensitivity ($0.6755$). Compared with fixed-ratio Tucker,
mean sensitivity increased from $0.5736$ to $0.6755$, while
mean specificity remained $0.7279$ at the reported precision.
Full fine-tuning achieved the highest mean specificity ($0.7639$),
reflecting a different sensitivity--specificity trade-off at the
validation-selected thresholds.

\begin{table}[t]
	\centering
	\small
	\caption{Predictive performance of the compared methods. Performance values are means across five seeds, with sample SDs in parentheses below. Bold denotes the highest mean in each performance column. For AdaLoRA, the parameter count refers to the final compact allocation.}
	\label{tab:main}
	\setlength{\tabcolsep}{3pt}\renewcommand{\arraystretch}{1.15}
\begin{tabular}{lcccccc}
			\toprule
			Method & \shortstack{Trainable\\(M)} & AUC & ACC & F1 & SEN & SPE\\
			\midrule
			
			Full fine-tuning
			& $42.691$
			& \meanSD{0.7430}{0.0099}
			& \meanSD{0.6684}{0.0218}
			& \meanSD{0.6100}{0.0229}
			& \meanSD{0.5585}{0.0413}
			& \bestSD{0.7639}{0.0539}\\
			
			LoRA
			& $0.646$
			& \meanSD{0.7301}{0.0243}
			& \meanSD{0.6579}{0.0206}
			& \meanSD{0.6194}{0.0403}
			& \meanSD{0.6038}{0.0789}
			& \meanSD{0.7049}{0.0544}\\
			
			AdaLoRA
			& $0.618$
			& \meanSD{0.7218}{0.0102}
			& \meanSD{0.6632}{0.0338}
			& \meanSD{0.6368}{0.0457}
			& \meanSD{0.6377}{0.0631}
			& \meanSD{0.6852}{0.0356}\\
			
			PiSSA
			& $0.646$
			& \meanSD{0.7218}{0.0186}
			& \meanSD{0.6579}{0.0317}
			& \meanSD{0.6022}{0.0309}
			& \meanSD{0.5585}{0.0605}
			& \meanSD{0.7443}{0.0849}\\
			
			PoLAR
			& $0.647$
			& \meanSD{0.7203}{0.0156}
			& \meanSD{0.6579}{0.0240}
			& \meanSD{0.6001}{0.0570}
			& \meanSD{0.5623}{0.1135}
			& \meanSD{0.7410}{0.0700}\\
			
			Fixed-ratio Tucker
			& $0.618$
			& \meanSD{0.7239}{0.0093}
			& \meanSD{0.6561}{0.0209}
			& \meanSD{0.6046}{0.0542}
			& \meanSD{0.5736}{0.0958}
			& \meanSD{0.7279}{0.0526}\\
			
			\shortstack[l]{Residual-guided\\Tucker (ours)}
 & $0.618$
 & \bestSD{0.7706}{0.0381}
 & \bestSD{0.7035}{0.0474}
 & \bestSD{0.6763}{0.0694}
 & \bestSD{0.6755}{0.1079}
 & \meanSD{0.7279}{0.0587}\\
\bottomrule
		\end{tabular}
\end{table}

\begin{figure}[t]
    \centering
    \includegraphics[width=\linewidth]{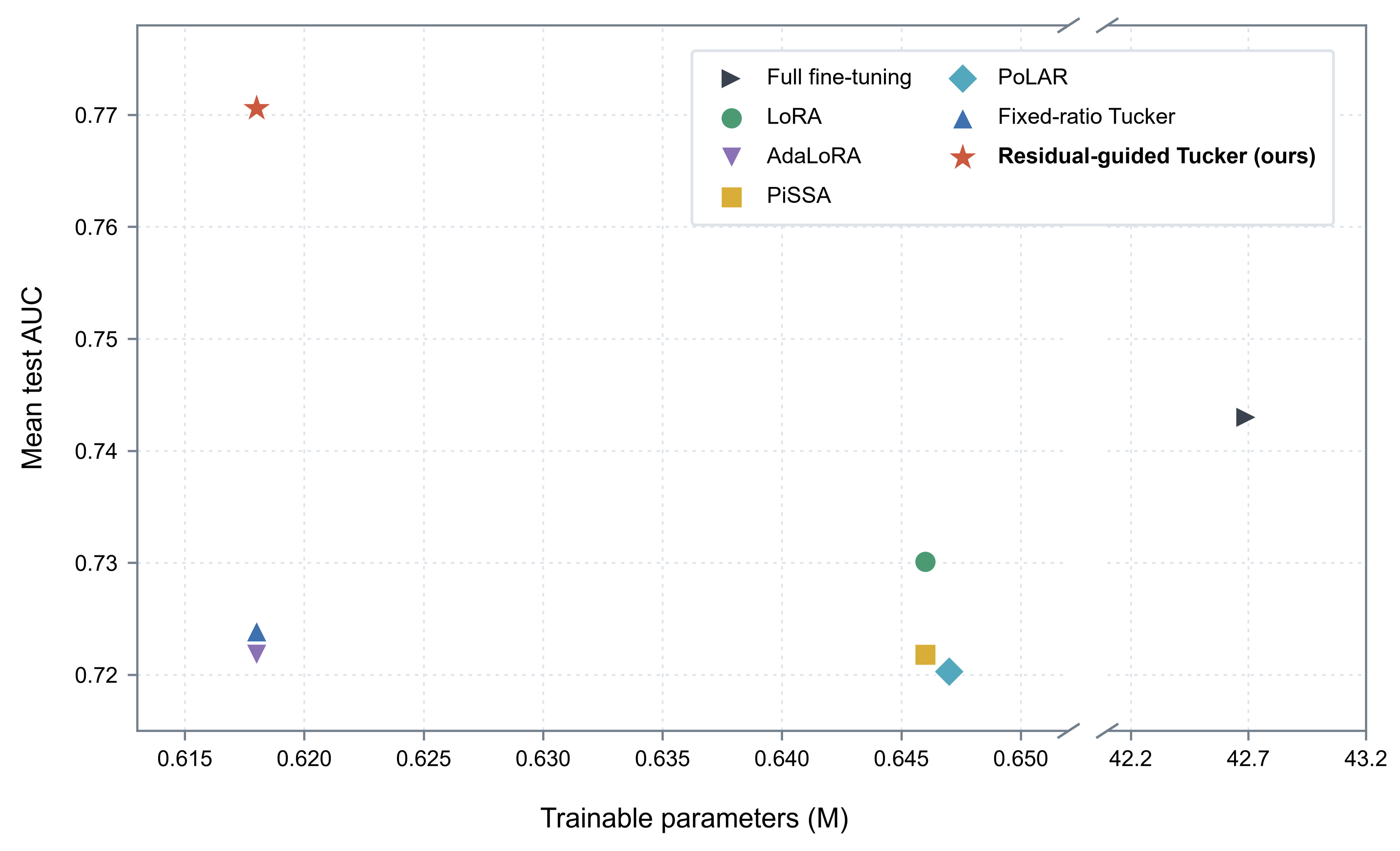}
    \caption{Mean test AUC versus trainable parameter count for the compared methods. The horizontal axis is broken to separate the PEFT methods from full fine-tuning; the two segments use different scales. M denotes millions of parameters. For AdaLoRA, the parameter count refers to the final compact allocation.}
    \label{fig:parameter-auc}
\end{figure}

\subsection{Ablation studies}
\label{subsec:normalization-ablation}

\paragraph{Contribution of trainable components}

Training the common components in addition to the classification
head increased mean AUC from $0.6820$ to $0.7250$
(Table~\ref{tab:ablation-controls}). Adding fixed-ratio Tucker
adapters yielded a mean AUC of $0.7239$, suggesting that additional
adapter capacity alone did not improve discrimination in this
setting. Residual-guided Tucker achieved a mean AUC of $0.7706$,
exceeding the common-components-only control by $4.56$ percentage
points. This improvement was accompanied by higher mean sensitivity
($0.6755$ versus $0.5321$) and lower mean specificity
($0.7279$ versus $0.7672$).

\paragraph{Effect of residual normalization}

We compared allocations based on normalized and absolute
reconstruction errors:
\begin{equation}
	S_{\ell,q}^{\mathrm{rel}}
	=\frac{\|\cE_{\ell,q}\|_F^2}{\max(Z_\ell,\varepsilon)},
	\qquad
	S_{\ell,q}^{\mathrm{abs}}=\|\cE_{\ell,q}\|_F^2.
	\label{eq:normalization-ablation}
\end{equation}
Both variants used the same adapter-budget ceiling and differed
only in the allocation score, with all other settings matched
as described in Section~\ref{subsec:comparisons}.

The normalized score achieved a mean AUC of $0.7706$, compared
with $0.7524$ for the absolute score, an increase of $1.82$
percentage points (Table~\ref{tab:normalization-ablation}).
Mean sensitivity increased from $0.6226$ to $0.6755$, while
mean specificity decreased from $0.7475$ to $0.7279$. 

\begin{table}[t]
	\centering
	\small
	\caption{Contribution of the trainable components. Values are means across five seeds, with sample SDs in parentheses below. Common denotes the classification head, channel-changing shortcuts, and BatchNorm affine parameters.}
	\label{tab:ablation-controls}
	\setlength{\tabcolsep}{3pt}\renewcommand{\arraystretch}{1.15}
\begin{tabular}{lcccccc}
			\toprule
			Configuration & \shortstack{Trainable\\(M)} & AUC & ACC & F1 & SEN & SPE\\
			\midrule
			
			Head-only
			& $0.066$
			& \meanSD{0.6820}{0.0107}
			& \meanSD{0.6351}{0.0229}
			& \meanSD{0.5520}{0.0128}
			& \meanSD{0.4830}{0.0103}
			& \meanSD{0.7672}{0.0484}\\
			
			Common only
			& $0.250$
			& \meanSD{0.7250}{0.0133}
			& \meanSD{0.6579}{0.0164}
			& \meanSD{0.5903}{0.0290}
			& \meanSD{0.5321}{0.0488}
			& \meanSD{0.7672}{0.0425}\\
			
			\shortstack[l]{Fixed-ratio Tucker\\+ common}
			& $0.618$
			& \meanSD{0.7239}{0.0093}
			& \meanSD{0.6561}{0.0209}
			& \meanSD{0.6046}{0.0542}
			& \meanSD{0.5736}{0.0958}
			& \meanSD{0.7279}{0.0526}\\
			
			\shortstack[l]{Residual-guided Tucker\\+ common}
 & $0.618$
 & \meanSD{0.7706}{0.0381}
 & \meanSD{0.7035}{0.0474}
 & \meanSD{0.6763}{0.0694}
 & \meanSD{0.6755}{0.1079}
 & \meanSD{0.7279}{0.0587}\\
			
			\bottomrule
		\end{tabular}
\end{table}

\begin{table}[t]
\centering
\small
\caption{Effect of residual normalization under a common adapter-budget ceiling. Values are means across five seeds, with sample SDs in parentheses below. Bold denotes the higher mean in each column.}
\label{tab:normalization-ablation}
\setlength{\tabcolsep}{3pt}\renewcommand{\arraystretch}{1.15}
\begin{tabular}{lccccc}
\toprule
Allocation score & AUC & ACC & F1 & SEN & SPE\\
\midrule
Absolute residual & \meanSD{0.7524}{0.0253} & \meanSD{0.6895}{0.0395} & \meanSD{0.6480}{0.0647} & \meanSD{0.6226}{0.0962} & \bestSD{0.7475}{0.0220}\\
\shortstack[l]{Normalized residual\\(proposed)} & \bestSD{0.7706}{0.0381} & \bestSD{0.7035}{0.0474} & \bestSD{0.6763}{0.0694} & \bestSD{0.6755}{0.1079} & \meanSD{0.7279}{0.0587}\\
\bottomrule
\end{tabular}
\end{table}

\subsection{Layerwise rank and parameter allocation}

Residual-guided Tucker redistributed adapter capacity across
layers and channel modes relative to fixed-ratio Tucker
(Figure~\ref{fig:allocation}, seed 42). Output-channel ranks
increased in the initial block and the first three downsampling
blocks, but decreased in both convolutions of the fourth
downsampling block and the second convolution of each bottleneck
block. Input-channel ranks increased in the second convolution
of the initial block, throughout the first three downsampling
blocks, and in the first bottleneck convolution. They decreased
in the second convolution of the fourth downsampling block and
the second convolution of each bottleneck block; the remaining
input-channel ranks were unchanged.

The two channel ranks did not necessarily change together.
For example, the first bottleneck convolution retained an output
rank of $32$ while its input rank increased from $32$ to $64$,
whereas the following convolution used output and input ranks
of $24$ and $16$, respectively. The corresponding parameter
allocation increased capacity in the earlier blocks and the
first bottleneck convolution while reducing it in several later
convolutions. Overall, the selected configuration used $99.93\%$
of the shared adapter budget, leaving $270$ parameters unused.
These patterns reflect proxy reconstruction quality and coupled
parameter costs, rather than direct measures of anatomical or
predictive importance.

\begin{figure}[t]
	\centering
	\includegraphics[width=\linewidth]{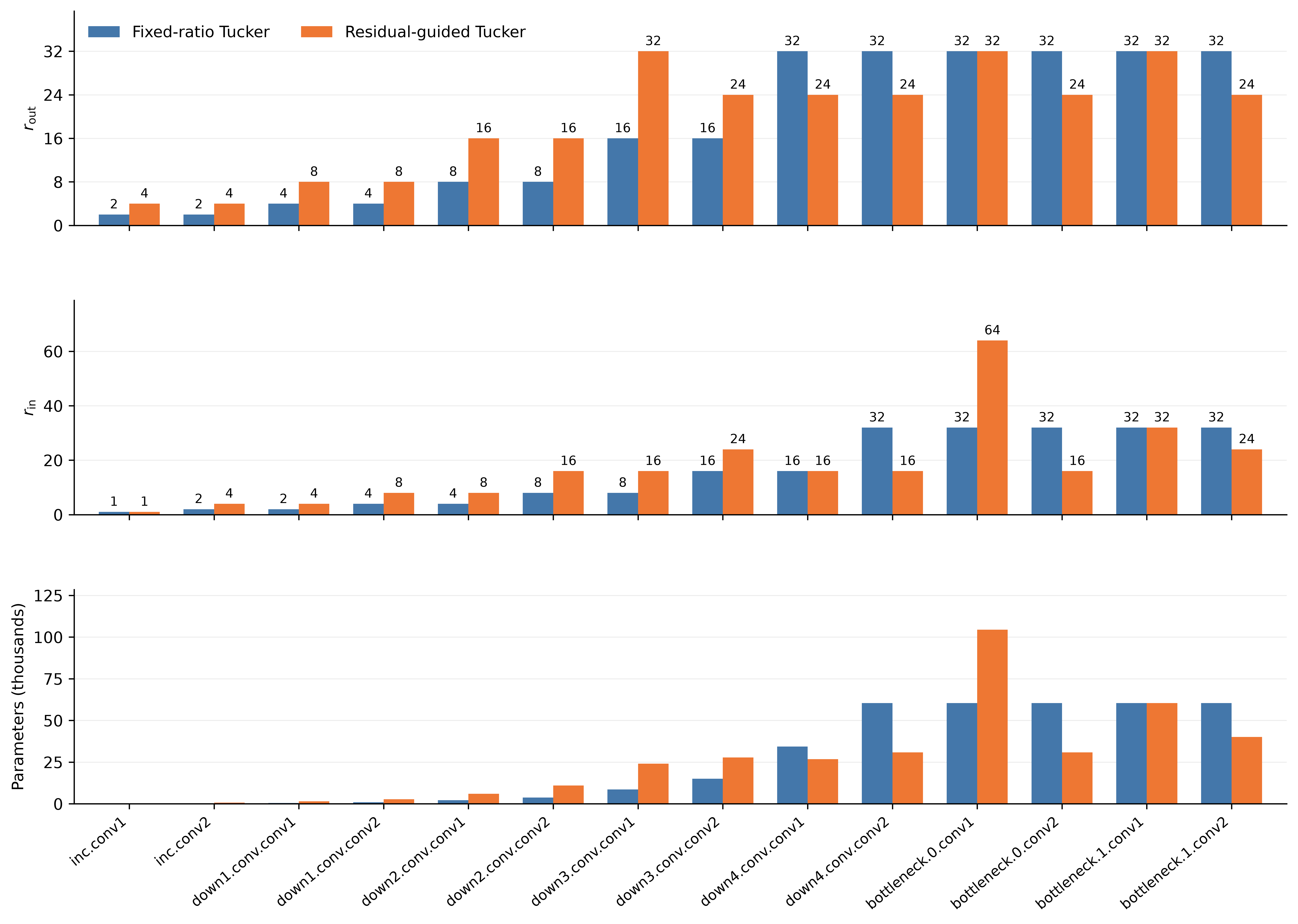}
	\caption{Layerwise channel ranks and adapter parameter counts for residual-guided Tucker and fixed-ratio Tucker (seed 42).}
	\label{fig:allocation}
\end{figure}

\subsection{Parameter and computational efficiency}

Residual-guided Tucker combined a small trainable parameter count
with reduced peak training memory (Table~\ref{tab:efficiency}).
For the evaluated configuration (seed 42), the adapters contained
$0.368$ million parameters, bringing the total trainable count to
$0.618$ million, or $1.45\%$ of the full model.
Peak allocated memory during final training was $2.235$ gibibytes (GiB),
a $17.31\%$ reduction relative to full fine-tuning ($2.703$ GiB).

These parameter and memory savings were achieved with forward
computational costs comparable to those of full fine-tuning.
A cold forward pass required $316.203$ billion floating-point
operations (GFLOPs), compared with
$313.801$ GFLOPs for full fine-tuning, with the additional cost
arising from adapter reconstruction. Caching the reconstructed
updates reduced the warm-forward cost to $313.801$ GFLOPs,
equal to that of full fine-tuning.
Test-set processing times were similar across methods.
For residual-guided Tucker, preparation, including model setup,
calibration, rank allocation, and final-model initialization,
required approximately $26$ s in the seed-42 run, accounting
for $1.5\%$ of the total preparation and training time.

\begin{table}[t]
	\centering
	\small
	\caption{Parameter and computational efficiency.
		FLOPs are measured for a cold forward pass, and memory denotes
		peak allocated memory during final training.
		Trainable parameter percentages are relative to full fine-tuning.
		Test-set processing time includes data loading and transfer.
		AdaLoRA is evaluated at its final compact allocation.
		Measurements exclude calibration and initialization.}
	\label{tab:efficiency}
	\setlength{\tabcolsep}{1.5pt}\renewcommand{\arraystretch}{1.15}
\begin{tabular}{lcccccc}
			\toprule
			Method
			& \shortstack{Adapter\\(M)}
			& \shortstack{Trainable\\(M)}
			& \shortstack{Trainable\\(\%)}
			& \shortstack{FLOPs\\(G)}
			& \shortstack{Peak memory\\(GiB)}
			& \shortstack{Test time\\(s)}\\
			\midrule
			
			Full fine-tuning
			& --
			& $42.691$
			& $100.00$
			& $313.801$
			& $2.703$
			& $4.955$\\
			
			LoRA 
			& $0.396$
			& $0.646$
			& $1.51$
			& $314.141$
			& $2.233$
			& $4.943$\\
			
			AdaLoRA
			& $0.368$
			& $0.618$
			& $1.45$
			& $314.095$
			& $2.258$
			& $4.948$\\
			
			PiSSA 
			& $0.396$
			& $0.646$
			& $1.51$
			& $314.141$
			& $2.233$
			& $4.998$\\
			
			PoLAR 
			& $0.397$
			& $0.647$
			& $1.51$
			& $314.141$
			& $2.233$
			& $4.994$\\
			
			Fixed-ratio Tucker
			& $0.368$
			& $0.618$
			& $1.45$
			& $316.543$
			& $2.235$
			& $5.134$\\
			
			\shortstack[l]{Residual-guided\\Tucker (ours)} & $0.368$ & $0.618$ & $1.45$ & $316.203$ & $2.235$ & $5.049$\\
\bottomrule
		\end{tabular}
\end{table}

\section{Discussion}

\subsection{Task-conditioned allocation of adapter capacity}

The comparison with fixed-ratio Tucker suggests that the
distribution of adapter capacity across layers affects adaptation
performance under a fixed adapter budget. The component ablations
provide complementary evidence: adding fixed-ratio Tucker adapters
did not improve mean AUC over tuning the common components alone,
whereas residual-guided Tucker achieved a higher mean AUC than
both configurations while sharing the fixed-ratio method's
adapter-budget ceiling. These findings support task-conditioned
capacity allocation in the evaluated setting.
Normalization measures reconstruction error relative to each
layer's proxy-update energy, reducing the influence of differences
in update magnitude. The normalization ablation provides empirical
support for this relative scoring criterion.

Building on tensor adaptation and budgeted factorization
\cite{si2024flora,yin2022batude,ali2026rocket}, the method uses
task-specific weight displacements to guide joint selection
of input- and output-channel ranks while accounting for their
coupled parameter costs. This formulation links capacity allocation
to the structure of downstream updates while preserving the
spatial kernel modes and respecting a shared adapter budget.

\subsection{Theoretical rationale for rank allocation}

The analysis connects proxy-update approximation with rank
allocation under a shared parameter budget. For a fixed
calibrated proxy network, the perturbation bound relates
channel-mode reconstruction errors to changes in network
outputs. This connection gives reconstruction quality a
network-level interpretation: approximating a task-specific
update controls a source of output discrepancy in the proxy
network. It therefore motivates evaluating candidate ranks
by the update information they preserve, complementing
rank choices based solely on layer dimensions.

The optimization formulation accounts for the interaction
between the two channel ranks. Each rank pair determines
both approximation quality and parameter cost, including
the bilinear contribution of the Tucker core. Selecting ranks
jointly across layers allows these competing demands to be
resolved under a shared budget. Dynamic programming yields
a globally optimal solution for the specified candidate sets
and scores, ensuring that the allocation objective is minimized
without relying on a greedy sequence of local rank decisions.
This guarantee concerns the discrete approximation surrogate,
whose relevance to prediction is assessed through final training.

Final training retains the selected ranks while reinitializing
the adapters, so proxy approximation determines capacity without
restricting the learned updates to the proxy singular subspaces.
This separation allows the final adapters to learn representations
suited to the downstream objective within the selected configuration.
Normalization further expresses reconstruction errors relative
to each layer's proxy-update energy, placing layers on a common
relative scale. The perturbation analysis motivates residual-based
approximation, while the normalization ablation provides empirical
support for this particular scoring choice. Together, the analysis
and experiments connect the design of the allocation criterion,
its exact optimization, and its utility in downstream adaptation.

\subsection{Limitations and future directions}
\label{subsec:limitations}

The findings are based on the evaluated ADNI cohort and
experimental configuration, and further validation is needed
to establish their applicability to routine clinical populations.
Results across five seeds characterize training variability
within this setting, with prediction consistency remaining
an important consideration for further development.
Although the method reduces trainable parameters and peak
memory during final PEFT training, dense forward computation
remains comparable to that of full fine-tuning. Calibration
also contributes preparation time and temporary memory
requirements to the overall adaptation cost.

Future work will investigate whether MRI-based progression
estimates can complement clinical assessment of patients
with MCI. Independent, multicenter validation should examine
performance across scanners, acquisition protocols, and patient
populations, with attention to risk calibration, subgroup
performance, and prediction consistency.
Prospective studies could then assess whether integrating
model predictions into clinical workflows provides additional
value for follow-up planning or trial recruitment.
These evaluations would help identify appropriate operating
thresholds and clarify the role of the approach in clinical
decision support.

\section{Conclusion}

We introduced residual-guided Tucker for task-conditioned
allocation of adapter capacity in pretrained 3D networks.
The method uses normalized reconstruction residuals of
training-derived proxy updates to jointly select input- and
output-channel ranks under a fixed adapter-parameter budget.
A fixed-network perturbation analysis motivates residual-based
allocation, and dynamic programming yields an exact solution
for the specified candidate ranks and scores. On the ADNI dataset, residual-guided Tucker achieved a higher mean AUC than fixed-ratio Tucker for MCI conversion prediction under the same adapter-budget ceiling, with total
trainable parameters corresponding to $1.45\%$ of the full model.
These findings support task-conditioned rank allocation as
a promising approach to parameter-efficient transfer in
3D medical imaging and motivate further evaluation across
cohorts, architectures, and parameter budgets.

\section*{Acknowledgements}

Data collection and sharing for the Alzheimer's Disease Neuroimaging
Initiative (ADNI) are funded by the National Institute on Aging
(National Institutes of Health Grant U19~AG024904).
The grantee organization is the Northern California Institute for
Research and Education. In the past, ADNI has also received funding
from the National Institute of Biomedical Imaging and Bioengineering,
the Canadian Institutes of Health Research, and private sector
contributions through the Foundation for the National Institutes of
Health (FNIH), including generous contributions from the following:
AbbVie; Alzheimer's Association; Alzheimer's Drug Discovery Foundation;
Araclon Biotech; BioClinica, Inc.; Biogen; Bristol-Myers Squibb Company;
CereSpir, Inc.; Cogstate; Eisai Inc.; Elan Pharmaceuticals, Inc.;
Eli Lilly and Company; EuroImmun; F. Hoffmann-La Roche Ltd and its
affiliated company Genentech, Inc.; Fujirebio; GE Healthcare; IXICO Ltd.;
Janssen Alzheimer Immunotherapy Research \& Development, LLC.;
Johnson \& Johnson Pharmaceutical Research \& Development LLC.;
Lumosity; Lundbeck; Merck \& Co., Inc.; Meso Scale Diagnostics, LLC.;
NeuroRx Research; Neurotrack Technologies; Novartis Pharmaceuticals
Corporation; Pfizer Inc.; Piramal Imaging; Servier; Takeda Pharmaceutical
Company; and Transition Therapeutics.

\section*{CRediT authorship contribution statement}

\textbf{Guanghua He}: Conceptualization, Methodology, Writing -- original draft.
\textbf{Hancan Zhu}: Software, Investigation, Writing -- review \& editing.
\textbf{Gaohang Yu}: Supervision, Validation.
\textbf{An Zhang}: Validation, Formal analysis.

\section*{Declaration of competing interest}

The authors declare that they have no known financial conflicts of interest or personal relationships that could have influenced the research presented in this manuscript.

\section*{Data availability}

The imaging and clinical data used in this study were obtained
from the Alzheimer's Disease Neuroimaging Initiative (ADNI)
database and are accessible to approved researchers through
the LONI Image and Data Archive (IDA), subject to the ADNI
Data Use Agreement. Details on data access are available at
\url{https://adni.loni.usc.edu/data-samples/adni-data/}.

\appendix

\section{Proof of the channel-mode HOSVD residual bound}
\label{app:hosvd}

\begin{proof}[Proof of Lemma~\ref{lem:hosvd-bound}]
	Let $P_1=P_{\mathrm{out}}(r_{\mathrm{out}};\cX)$ and
	$P_2=P_{\mathrm{in}}(r_{\mathrm{in}};\cX)$.
	Adding and subtracting $\cX\times_1P_1$ gives
	\begin{align}
		\cX-\cX\times_1P_1\times_2P_2
		&=(\cX-\cX\times_1P_1)
		\nonumber\\
		&\quad+
		(\cX\times_1P_1-\cX\times_1P_1\times_2P_2).
		\label{eq:appendix-hosvd-decomp}
	\end{align}
	Since $P_1$ is an orthogonal projector, the two terms
	are orthogonal in the Frobenius inner product. Thus,
	\begin{align}
		\|\cX-\cX\times_1P_1\times_2P_2\|_F^2
		&=\|\cX-\cX\times_1P_1\|_F^2
		\nonumber\\
		&\quad+
		\|\cX\times_1P_1-\cX\times_1P_1\times_2P_2\|_F^2.
		\label{eq:appendix-hosvd-orthogonal}
	\end{align}
	Mode products on distinct modes commute, and orthogonal
	projections are nonexpansive. Therefore,
	\begin{equation}
		\|(\cX-\cX\times_2P_2)\times_1P_1\|_F
		\le
		\|\cX-\cX\times_2P_2\|_F.
		\label{eq:appendix-hosvd-nonexp}
	\end{equation}
	By the Eckart--Young theorem, the projection errors
	along the two channel modes satisfy
	\begin{align}
		\|\cX-\cX\times_1P_1\|_F^2
		&=\sum_{i>r_{\mathrm{out}}}\sigma_{\mathrm{out},i}^2,
		\label{eq:appendix-tail-out}\\
		\|\cX-\cX\times_2P_2\|_F^2
		&=\sum_{j>r_{\mathrm{in}}}\sigma_{\mathrm{in},j}^2.
		\label{eq:appendix-tail-in}
	\end{align}
	Combining these results yields
	Eq.~\eqref{eq:hosvd-tail-bound}.
\end{proof}

\section{Implementation of matrix-adapter baselines}
\label{app:baseline-implementation}

All four baselines adapt the same $14$ convolutional kernels,
unfolded as $W_0\in\R^{m\times n}$, where
$m=C_{\mathrm{out}}$ and $n=27C_{\mathrm{in}}$.
Each method retains $250{,}082$ common trainable parameters,
with an effective adapter scaling factor of $s=1$.

\paragraph{LoRA \cite{hu2022lora}}
LoRA uses $\Delta W=sBA$ with rank $r=4$ and
$\alpha_{\mathrm{LoRA}}=4$, where
$s=\alpha_{\mathrm{LoRA}}/r$.
Kaiming-uniform initialization of $A$ and zero initialization
of $B$ ensure zero initial updates.
Dropout is applied only in the classification head.
The adapter and total trainable parameter counts are
$396{,}396$ and $646{,}478$, respectively.

\paragraph{AdaLoRA \cite{zhang2023adalora}}
AdaLoRA uses $\Delta W=sP\operatorname{diag}(e)Q$ with an
initial rank of $6$. The factors $P$ and $Q$ are initialized
from a Gaussian distribution with standard deviation $0.02$,
while $e=0$.
Component importance is estimated from moving averages of
absolute parameter--gradient products and their absolute
deviations, both with decay $0.85$.
Each component score combines its singular-value importance
with the mean importance of the corresponding factor
column and row.

We adapt AdaLoRA's rank-count budget to a parameter-cost
budget. For $T$ training steps, let
$t_w=\lfloor0.1T\rfloor$, $t_e=T-\lfloor0.2T\rfloor$,
$b_0$ denote the initial adapter parameter count, and
$b_*=367{,}819$.
The budget schedule is
\begin{equation}
	b(t)=
	\begin{cases}
		b_0, & t\le t_w,\\
		b_*+(b_0-b_*)
		\left(1-\dfrac{t-t_w}{t_e-t_w}\right)^3,
		& t_w<t<t_e,\\
		b_*, & t\ge t_e.
	\end{cases}
\end{equation}
Masks are updated every $100$ steps after warm-up and
finalized at $t_e$.
Each component costs $m+n+1$ parameters.
The importance-ranked prefix closest to the target budget
is retained, with ties favoring lower cost.
The final compact adapter contains $368{,}162$ parameters,
exceeding the target by $343$ ($0.093\%$), and has
$618{,}244$ total trainable parameters.

Masked components may regrow before $t_e$ but remain masked
thereafter.
After $t_e$, the training objective includes an orthogonality
penalty of $0.5$ times the mean layerwise value of
$\|P^\top P-I\|_F^2+\|QQ^\top-I\|_F^2$,
computed over retained components.
Checkpoint selection by validation AUC begins after
rank allocation is finalized.
The saved factors are compact, while the training model
and optimizer retain their initial tensor shapes.

\paragraph{PiSSA}
PiSSA uses rank $r=4$ and initializes its factors using
an exact reduced singular value decomposition (SVD)
in 32-bit floating-point precision (FP32) on the CPU.
For the leading singular triplets $U_r,\Sigma_r,V_r$,
\begin{equation}
	B_0=U_r(\Sigma_r/s)^{1/2},\quad
	A_0=(\Sigma_r/s)^{1/2}V_r^\top,\quad
	W_{\mathrm{res}}=W_0-sB_0A_0.
\end{equation}
The frozen residual and trainable factors reconstruct
$W_0$ at initialization up to numerical precision.
Parameter counts are identical to those of LoRA.

\paragraph{PoLAR}
PoLAR \cite{lion2025polar} uses
$\Delta W=sX\Theta Y^\top$ with rank $r=4$,
where $X\in\R^{m\times r}$,
$Y\in\R^{n\times r}$, and
$\Theta\in\R^{r\times r}$.
The factors $X$ and $Y$ are initialized with orthonormal columns
obtained by QR decomposition of Gaussian random matrices.
The matrix $\Theta$ is initialized to zero.
After AMP gradient unscaling, the gradient $G_X$
is replaced by the landing field
\begin{equation}
	\Gamma_X=
	\tfrac12\left[
	G_X(X^\top X)-X(G_X^\top X)
	\right]
	+4\lambda_{\mathrm P}X(X^\top X-I),
	\qquad
	\lambda_{\mathrm P}=5\times10^{-3}.
\end{equation}
The same update rule applies to $Y$, while $\Theta$
uses its Euclidean gradient.
Adam uses the common learning rate and weight decay
$10^{-5}$ for all factors, without per-step QR retraction.
The adapter and total trainable parameter counts are
$396{,}620$ and $646{,}702$, respectively.

\end{document}